\ifcvprfinal\pagestyle{empty}\fi
\begin{document}

\title{Bilinear Random Projections for Locality-Sensitive Binary Codes}

\author{Saehoon Kim and Seungjin Choi\\
Department of Computer Science and Engineering \\
Pohang University of Science and Technology, Korea\\
{\tt\small \{kshkawa,seungjin\}@postech.ac.kr}
}

\maketitle

\begin{abstract}
Locality-sensitive hashing (LSH) is a popular data-independent indexing method for approximate
similarity search, where random projections followed by quantization hash the points from the database
so as to ensure that the probability of collision is much higher for objects that are close to each other than for those that are far apart.
Most of high-dimensional visual descriptors for images exhibit a natural matrix structure.
When visual descriptors are represented by high-dimensional feature vectors and long binary codes are assigned, a random projection matrix requires expensive complexities in both space and time.
In this paper we analyze a bilinear random projection method where feature matrices are transformed to
binary codes by two smaller random projection matrices.
We base our theoretical analysis on extending Raginsky and Lazebnik's result where random Fourier features are composed with random binary quantizers to form locality sensitive binary codes.
To this end, we answer the following two questions: (1) whether a bilinear random projection also yields similarity-preserving binary codes; (2) whether a bilinear random projection yields performance gain or loss, compared to a large linear projection.
Regarding the first question, we present upper and lower bounds on the expected Hamming distance between binary codes produced by bilinear random projections.
In regards to the second question, we analyze the upper and lower bounds on covariance between two bits of binary codes, showing that the correlation between two bits is small.
Numerical experiments on MNIST and Flickr45K datasets confirm the validity of our method.
\end{abstract}

\section{Introduction}
\label{sec:introduction}

Nearest neighbor search, the goal of which is to find most relevant items to a query given a pre-defined distance metric,
is a core problem in various applications such as classification \cite{ShakhnarovichG2006book}, object matching \cite{LoweDG2004ijcv},
retrieval \cite{NisterD2006cvpr}, and so on.
A naive solution to nearest neighbor search is linear scan where all items in database are sorted
according to their similarity to the query, in order to find relevant items, requiring linear complexity.
In practical applications, however, linear scan is not scalable due to the size of examples in database.
Approximate nearest neighbor search, which trades accuracy for scalability, becomes more important than ever.
Earlier work \cite{FriedmanJH77acmtoms,AryaS98jacm} is a tree-based approach that
exploits spatial partitions of data space via various tree structures to speed up search.
While tree-based methods are successful for low-dimensional data, their performance is not
satisfactory for high-dimensional data and does not guarantee faster search compared to linear scan \cite{GionisA99vldb}.

\begin{figure}[t]
\begin{center}
\includegraphics[width=1.1\linewidth]
{./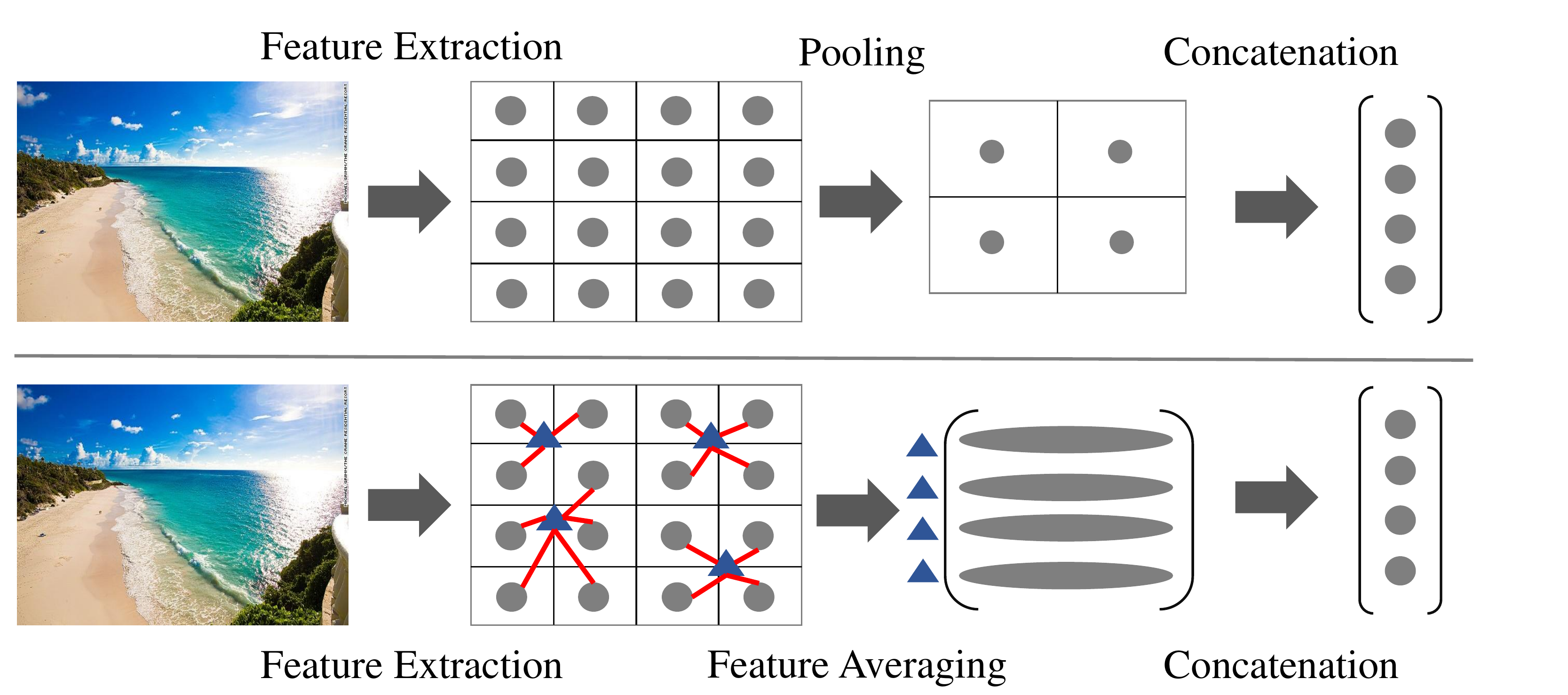}
\end{center}
\caption{Two exemplary visual descriptors, which have a natural matrix structure,
are often converted to long vectors.
The above illustration describes LLC  \cite{WangJ2010cvpr_b},
where spatial information of initial descriptors is summarized into
a final concatenated feature with spatial pyramid structure.
The bottom shows VLAD \cite{JegouH2010cvpr},
where the residual between initial descriptors and
their nearest visual vocabulary (marked as a triangle) is encoded in a matrix form.}
\label{fig:matrix_structure_feature}
\end{figure}

For high-dimensional data, a promising approach is approximate similarity search via hashing.
Locality-sensitive hashing (LSH) is a notable data-independent
hashing method, where randomly generates binary codes
such that two similar items in database are hashed to have high probability of collision \cite{GionisA99vldb,CharikarMS2002stoc,RaginskyM2009nips}.
Different similarity metric leads to various LSH, including angle preservation \cite{CharikarMS2002stoc}, $\ell_p$ norm ($p\in (0, 2]$) \cite{DatarM2004socg},
and shift-invariant kernels \cite{RaginskyM2009nips}.
Since LSH is a pure data-independent approach, it needs multiple hash tables or long code, requiring high memory footprint.
To remedy high memory consumption, data-dependent hashing \cite{SalakhutdinovR2007sigirworkshop,WeissY2007nips,WeissY2012eccv}
has been introduced to learn similarity-preserving binary codes from data such that embedding into a binary space preserves similarity between data points in the original space.
In general, data-dependent hashing generates compact binary codes, compared to LSH.
However, LSH still works well, compared to data-dependent hashing methods for a very large code size.

Most of existing hashing algorithms does not take into account
a natural matrix structure frequently observed in image descriptors
\cite{JegouH2010cvpr, WangJ2010cvpr_b}, as shown in Fig. \ref{fig:matrix_structure_feature}.
When a matrix descriptor is re-organized as a high-dimensional vector,
most of hashing methods suffer from high storage and time complexity due to a single large projection matrix.
Given a $d$-dimensional vector, the space and time complexities
to generate a code of size $k$ are both $O(dk)$.
In the case of $100,000$-dimensional data, 40GB\footnote{If we use single precision to represent a floating-point,
the projection matrix needs $100,000 \times 100,000 \times 4$ bytes
$\approx 40$GB.} is required to store a projection matrix
to generate a binary code of length $100,000$,
which is not desirable in constructing a large-scale vision system.

Bilinear projection, which consists of left and right projections, is a promising approach to
handling data with a matrix structure. It has been successfully applied to two-dimensional
principal component analysis (2D-PCA) \cite{YangJ2004ieeetpami} and
2D canonical correlation analysis (2D-CCA)\cite{LeeSH2007ieeespl}, demonstrating that the time and space
complexities are reduced while retaining performance, compared to a single large projection method.
Recently, bilinear projections are adopted to the angle-preserving LSH \cite{GongY2013cvpr},
where the space and time complexities are $O(\sqrt{dk})$ and $O(d\sqrt{k})$, to generate binary codes of size $k$
for a $\sqrt{d}$ by $\sqrt{d}$ matrix data.
Note that when such matrix data is re-organized as $d$-dimensional vector, the space and time complexities
for LSH are both $O(dk)$.
While promising results for hashing with bilinear projection are reported in \cite{GongY2013cvpr},
its theoretical analysis is not available yet.

In this paper we present a bilinear extension of LSH from shift-invariant kernels (LSH-SIK) \cite{RaginskyM2009nips}
and attempt to the following two questions on whether:
\begin{itemize}
\item randomized bilinear projections also yield similarity-preserving binary codes;
\item there is performance gain or loss when randomized bilinear projections are adopted
instead of a large single linear projection.
\end{itemize}
Our analysis shows that LSH-SIK with bilinear projections generates similarity-preserving
binary codes and that the performance is not much degraded compared to LSH-SIK with a large single linear projection.

\section{Related Work}
\label{sec:related}

In this section we briefly review LSH algorithms for preserving angle \cite{CharikarMS2002stoc}
or shift-invariant kernels \cite{RaginskyM2009nips}. We also review an existing bilinear hashing method \cite{GongY2013cvpr}.

\subsection{LSH: Angle Preservation}
\label{subsec:lsha}

Given a vector $\bx$ in $\Real^{d}$, a hash function $h_a(\bx)$ returns a
value $1$ or $0$, i.e., $h_a(\cdot): \Real^{d} \mapsto \{0,1\}$.
We assume that data vectors are centered, i.e., $\sum_{i=1}^{N} \bx_i = 0$, where $N$ is the number of samples.
The random hyperplane-based hash function involves a random projection
followed by a binary quantization, taking the form:
\be
\label{eq:ha}
h_a(\bx) \triangleq  \frac{1}{2} \Big\{ 1+\mbox{sgn} \left(\bw^\top\bx \right) \Big\},
\ee
where $\bw \in \Real^{d}$ is a random vector sampled on a unit $d$-sphere and
$\mbox{sgn}(\cdot)$ is the sign function which returns 1 whenever the input is nonnegative and 0 otherwise.
It was shown in \cite{CharikarMS2002stoc} that the random hyperplane method naturally gives
a family of hash functions for vectors in $\Real^{d}$ such that
\be
\label{eq:Ph}
\label{eq:LSH_collision_prob}
    \mathbb{P} \Big[ h_a(\bx) = h_a(\by) \Big] & = & 1 - \frac{\theta_{\bx, \by}}{\pi},
\ee
where $\theta_{\bx,\by}$ denotes the angle between two vectors $\bx$ and $\by$.
This technique, referred to as LSH-angle, works well for preserving an angle, but it does not
preserve other types of similarities defined by a kernel between two vectors.

\subsection{LSH: Shift-Invariant Kernels}
\label{subsec:lshsik}

Locality-sensitive hashing from shift-invariant kernels \cite{RaginskyM2009nips}, referred to as LSH-SIK,
is a random projection-based encoding scheme, such that the expected Hamming distance between the binary codes of
two vectors is related to the value of shift-invariant kernel between two vectors.
Random Fourier feature (RFF) is defined by
\be
\label{eq:rff}
\phi_{w}(\bx) \triangleq \sqrt{2} \cos(\bw^\top\bx + b),
\ee
where $\bw$ is drawn from a distribution corresponding to an underlying shift-invariant kernel,
i.e., $\bw \sim p_{\kappa}$, $b$ is drawn from a uniform distribution over $[0,2\pi]$,
i.e., $b \sim \mbox{Unif}[0,2\pi]$.
If the kernel $\kappa$ is properly scaled, Bochner's theorem guarantees
$\E_{w,b}  \big[ \phi_{w}(\bx) \phi_{w}(\by) \big] = \kappa(\bx-\by)$,
where $\E$ is the statistical expectation and $\kappa(\cdot)$ represents a shift-invariant kernel \cite{RahimiA2007nips}.

LSH-SIK builds a hash function, $h(\cdot): \Real^{d} \mapsto \{0,1\}$,
composing RFFs with a random binary quantization
\be
\label{eq:h}
    h(\bx) \triangleq \frac{1}{2} \left\{ 1+\mbox{sgn} \left( \frac{1}{\sqrt{2}}\phi_{w}(\bx) + t \right) \right\},
\ee
where $t \sim \mbox{Unif}[-1,1]$.
The most appealing property of LSH-SIK provides upper- and lower-bounds on the expected Hamming distance
between any two embedded points, which is summarized in Theorem \ref{thm:bounds_lsh}.

\begin{theorem}{\cite{RaginskyM2009nips}}
\label{thm:bounds_lsh}
Define the functions
\bee
    g_1(\zeta) & \triangleq & \frac{4}{\pi^2}(1-\zeta) \\
    g_2(\zeta) & \triangleq & \min \left\{ \frac{1}{2}\sqrt{1-\zeta},
    \frac{4}{\pi^2} \left(1-\frac{2}{3} \zeta \right) \right\},
\eee
where $\zeta \in [0,1]$, and $g_1(0)=g_2(0)=\frac{4}{\pi^2}$, $g_1(1)=g_2(1)=0$.
Mercer kernel $\kappa$ is shift-invariant, normalized, and satisfies $\kappa(\alpha \bx - \alpha \by) \leq \kappa(\bx-\by)$
for any $\alpha \geq 1$.
Then the expected Hamming distance between any two embedded points satisfies
\be
    g_1 (\kappa(\bx-\by)) \leq \E \Big[ \calI \left[h(\bx) \neq h(\by) \right] \Big] \leq g_2 (\kappa(\bx-\by)),
\ee
where $\calI[\cdot]$ is the indicator function which equals 1 if its argument is true and 0 otherwise.
\end{theorem}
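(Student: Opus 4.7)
The plan is to reduce the expected Hamming distance to a collision probability and then compute that probability by integrating out the three sources of randomness $t$, $b$, and $\bw$ in turn. Since $\calI[h(\bx)\neq h(\by)]$ is $\{0,1\}$-valued, its expectation equals $\mathbb{P}[h(\bx)\neq h(\by)]$, so the theorem reduces to two-sided bounds on this probability in terms of $\kappa(\bx-\by)$.

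First I would condition on $(\bw,b)$ and integrate out $t$. Writing $c_\bx := \cos(\bw^\top\bx+b)$ and $c_\by := \cos(\bw^\top\by+b)$, one checks that $h(\bx)\neq h(\by)$ iff $t$ lies strictly between $-c_\bx$ and $-c_\by$, so the conditional probability over $t\sim\mbox{Unif}[-1,1]$ equals $|c_\bx-c_\by|/2$. The identity $\cos A-\cos B=-2\sin\!\bigl(\tfrac{A+B}{2}\bigr)\sin\!\bigl(\tfrac{A-B}{2}\bigr)$, applied with $A=\bw^\top\bx+b$, $B=\bw^\top\by+b$, factorizes the $b$-dependence; averaging $|\sin(\cdot+b)|$ over $b\sim\mbox{Unif}[0,2\pi]$ gives $2/\pi$ by periodicity. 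With $\bxi := \bx-\by$, one therefore arrives at
\begin{equation}
\mathbb{P}[h(\bx)\neq h(\by)] \;=\; \frac{2}{\pi}\,\E_{\bw}\!\left[\,\bigl|\sin\bigl(\tfrac{1}{2}\bw^\top\bxi\bigr)\bigr|\,\right].
\end{equation}

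The main tool for the $g_1$ bound and the second branch of $g_2$ is the Fourier expansion $|\sin(z/2)|=\tfrac{2}{\pi}-\tfrac{4}{\pi}\sum_{k\geq 1}\cos(kz)/(4k^2-1)$, whose coefficients are absolutely summable so that expectation and sum commute. Bochner's theorem identifies $\E_{\bw}[\cos(k\bw^\top\bxi)]=\kappa(k\bxi)$, giving
\begin{equation}
\E_{\bw}\!\left[\,\bigl|\sin\bigl(\tfrac{1}{2}\bw^\top\bxi\bigr)\bigr|\,\right]=\tfrac{2}{\pi}-\tfrac{4}{\pi}\sum_{k\geq 1}\frac{\kappa(k\bxi)}{4k^2-1}.
\end{equation}
For the lower bound, the monotonicity hypothesis $\kappa(k\bxi)\leq\kappa(\bxi)$ for every integer $k\geq 1$ together with the telescoping identity $\sum_{k\geq 1}1/(4k^2-1)=1/2$ gives $\E_{\bw}|\sin|\geq\tfrac{2}{\pi}(1-\kappa(\bxi))$, which multiplied by $2/\pi$ produces $g_1(\kappa(\bxi))$. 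For the Fourier branch of $g_2$, I would retain only the $k=1$ term, using that the remaining $\kappa(k\bxi)$ are non-negative for the shift-invariant kernels typically considered here (Gaussian, Laplacian, Cauchy), so $\sum\geq\kappa(\bxi)/3$ and hence $\mathbb{P}\leq\tfrac{4}{\pi^2}(1-\tfrac{2}{3}\kappa(\bxi))$.

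The first branch $\tfrac{1}{2}\sqrt{1-\kappa(\bxi)}$ of $g_2$ comes from a separate application of Jensen's inequality to the concave square root: $\E_{\bw}|\sin(\tfrac{1}{2}\bw^\top\bxi)|\leq\sqrt{\E_{\bw}\sin^2(\tfrac{1}{2}\bw^\top\bxi)}=\sqrt{(1-\kappa(\bxi))/2}$ after using $\sin^2(z/2)=(1-\cos z)/2$ and Bochner once more. Multiplying by $2/\pi$ and noting that $\sqrt{2}/\pi<1/2$ gives the stated bound, and taking the minimum of the two branches yields $g_2(\kappa(\bxi))$. The most delicate step I expect to wrestle with is the Fourier-based upper bound, which rests on sign information about $\kappa(k\bxi)$ for all integer $k\geq 2$ rather than on monotonicity alone; this is precisely why $g_2$ is stated as a minimum, with the Jensen branch serving as a robust fallback in regimes where the Fourier branch would otherwise require an additional non-negativity assumption.
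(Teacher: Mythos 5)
Your proposal is correct and follows essentially the same route as the cited Raginsky--Lazebnik argument, which this paper does not reprove for Theorem~1 but reuses verbatim for its bilinear analogue (integrating out $t$ via the $|u-v|/2$ lemma, factoring out $b$ with the product-to-sum identity, and expanding $|\sin(\cdot)|$ in its Fourier series so that Bochner's theorem converts each $\E_{\bw}[\cos(k\bw^\top\bxi)]$ into $\kappa(k\bxi)$, plus the separate Jensen branch for $\tfrac{1}{2}\sqrt{1-\zeta}$). Your closing caveat about needing $\kappa(k\bxi)\ge 0$ for the Fourier branch of $g_2$ is the right one to flag; it is implicit in the theorem's normalization hypothesis ($\zeta\in[0,1]$) and in the reference's assumptions.
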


The bounds in Theorem \ref{thm:bounds_lsh} indicate that binary codes determined by
LSH-SIK well preserve the similarity defined by the underlying shift-invariant kernel.

\subsection{Hashing with Bilinear Projections}

Most of high-dimensional descriptors for image, including HOG, Fisher Vector (FV), and VLAD,
exhibit a natural matrix structure.
Suppose that $\bX \in \Real^{d_w \times d_v}$ is a descriptor matrix.
The matrix is reorganized into a vector $\bx = \mbox{vec}(\bX) \in \Real^{d}$ where $d=d_w d_v$, and then
a binary code of size $k$ is determined by $k$ independent use of the hash function (\ref{eq:h}).
This scheme requires $O(dk)$ in space and time.

A bilinear projection-based method \cite{GongY2013cvpr} constructs a hash function
$H_a(\cdot): \Real^{d_w\times d_v} \mapsto \{0, 1\}^{k_wk_v}$ that is of the form
\be
\label{eq:Ha}
    H_a(\bX) \triangleq \frac{1}{2} \left\{1+\mbox{sgn} \Big(\mbox{vec} \left(\bW^\top\bX\bV \right) \Big) \right\},
\ee
where $\bW \in \Real^{d_w \times k_w}$ and $\bV \in \Real^{d_v \times k_v}$, to produce a binary code of size $k=k_w k_v$.
This scheme reduces space and time complexity to $O(d_wk_w + d_vk_v)$ and $O(d_w^2 k_w + d_v^2k_v)$, respectively,
while a single large linear projection requires $O(d_w d_v k_w k_v)$ in space and time.\footnote{
Recently, \cite{YuFX2014icml} proposes a circulant embedding, which is implemented by discrete Fourier transform,
to reduce the space and time complexities to $O(d)$ and $O(d\log d)$
when $(d=d_w \times d_v)$ and the code length is $d$. Even though the circulant embedding is faster than bilinear projections,
we believe that it is worth analyzing hashing with bilinear projections, because the implementation is simpler than \cite{YuFX2014icml}.}
Empirical results in \cite{GongY2013cvpr} indicate that a random bilinear projection produces
comparable performance compared to a single large projection.
However, its theoretical behavior is not fully investigated.
In the next section, we consider a bilinear extension of LSH-SIK (\ref{eq:h}) and present
our theoretical analysis.

\section{Analysis of Bilinear Random Projections}
\label{sec:main}

In this section we present the main contribution that is an theoretical analysis of a bilinear extension of LSH-SIK.
To this end, we consider a hash function $h(\cdot): \Real^{d_w \times d_v} \mapsto \{0,1\}$ that is of the form
\be
\label{eq:hbl}
    h(\bX) \triangleq \frac{1}{2} \Big\{1+\mbox{sgn} \Big(\cos \big(\bw^\top\bX\bv + b \big) + t \Big) \Big\},
\ee
where $\bw, \bv \sim \calN(0,\bI)$, $b \sim \mbox{Unif}[0,2\pi]$, and $t \sim \mbox{Unif}[-1,1]$.
With the abuse of notation, we use $h(\cdot)$ for the case of randomized bilinear hashing, however,
it can be distinguished from (\ref{eq:h}), depending on its input argument $\bx$ or $\bX$.
To produce binary code of size $k=k_w k_v$, the hash function $H(\cdot): \Real^{d_w \times d_v} \mapsto \{0,1\}^{k}$
takes the form:
\be
\label{eq:Hbl}
    H(\bX) \triangleq \frac{1}{2} \Big\{1+\mbox{sgn} \Big(\cos \big(\mbox{vec}(\bW^\top\bX\bV) + \bb \big) + \bt \Big) \Big\},
\ee
where each column of $\bW$ or of $\bV$ is independently drawn from spherical Gaussian with zero mean
and unit variance, each entry of $\bb \in \Real^{k}$ or of $\bt \in \Real^{k}$ is drawn uniformly from $[0,2\pi]$ and $[-1,1]$, respectively.

We attempt to answer two questions on whether:
(1) bilinear random projections also yield similarity-preserving binary codes like the original LSH-SIK;
(2) there is performance gain or degradation when bilinear random projections are adopted
instead of a large linear projection.

To answer the first question, we compute the upper and lower bound on the expected Hamming distance
$\E \Big[ \calI \big[h(\bX) \neq h(\bY) \big] \Big]$ between any two embedded points computed by bilinear LSH-SIK
with Gaussian kernel.
Compared to the the original upper and lower bounds for LSH-SIK \cite{RaginskyM2009nips}
with a single linear projection (Theorem \ref{thm:bounds_lsh}), our upper bound is the same
and lower bound is slightly worse when the underlying kernel is Gaussian.

Regarding the second question, note that some of bits of binary codes computed by
the hash function (\ref{eq:Hbl}) share either left or right projection (column vector of $\bW$ or $\bV$),
leading to correlations between two bits.
We show that the covariance between two bits is not high by analyzing
the upper and lower bounds on covariance between the two bits.

\subsection{Random Fourier Features}
\label{subsec:rffbl}

We begin with investigating the properties of random Fourier features \cite{RahimiA2007nips} in the case of bilinear projections,
since BLSH-SIK (an abbreviation of bilinear LSH-SIK) bases its theoretical analysis on these properties.
To this end, we consider bilinear RFF:
\be
\label{eq:rffbl}
\phi_{w,v}(\bX) \triangleq \sqrt{2} \cos(\bw^\top \bX \bv + b),
\ee
where $\bw, \bv \sim \calN(\0,\bI)$ and $b \sim \mbox{Unif}[0, 2\pi]$.

In the case of randomized linear map where $\bw \sim \calN(0,\bI)$,
$\E [\phi_{w}(\bx) \phi_{w}(\by)] = \kappa_g(\bx-\by)$, where $\kappa_g(\cdot)$ is Gaussian kernel.
Unfortunately, for the randomized bilinear map, $\E \big[ \phi_{w,v}(\bX) \phi_{w,v}(\bY) \big] \neq \kappa_g \big(\mbox{vec}(\bX-\bY) \big)$,
where Gaussian kernel defined as
\bee
\kappa_g \big(\mbox{vec}(\bX-\bY) \big)  \hspace*{-.2cm} & \triangleq & \hspace*{-.2cm}
\mbox{exp} \left\{ -\frac{1}{2} \| \mbox{vec}(\bX-\bY) \|_{2}^{2} \right\} \\
\hspace*{-.2cm}  & = & \hspace*{-.2cm}
\mbox{exp} \left\{ -\frac{1}{2}\mbox{tr} \Big[ (\bX-\bY)(\bX-\bY)^\top \Big] \right\},
\eee
where $\bX, \bY \in \Real^{d_w \times d_v}$ and the scaling parameter of Gaussian kernel is set as 1.
However, we show that $\E \big[ \phi_{w,v}(\bX) \phi_{w,v}(\bY) \big]$ is between $\kappa_g \big(\mbox{vec}(\bX-\bY) \big)$
and $\kappa_g \big(\mbox{vec}(\bX-\bY) \big)^{0.79}$, which is summarized in the following lemma.

\begin{lemma}
\label{lem:rffbl}
Define $\Delta = (\bX-\bY)(\bX-\bY)^\top$.
Denote by $\{\lambda_j\}$ leading eigenvalues of $\Delta$.
The inner product between RFFs is given by
\be
\label{eq:kb}
\E_{w,v,b} \big[\phi_{w, v}(\bX)\phi_{w, v}(\bY) \big] & = & \prod_j (1+\lambda_j)^{-\frac{1}{2}} \nonumber \\
& \triangleq & \kappa_{b}(\bX-\bY).
\ee
Then, $\kappa_{b}(\bX-\bY)$ is upper and lower bounded in terms of Gaussian kernel $\kappa_g(\mbox{vec}(\bX-\bY))$:
\small
\bee
    \qquad \kappa_g \big(\mbox{vec}(\bX-\bY) \big) \leq \kappa_{b} \big(\bX-\bY \big) \leq
    \kappa_g \big(\mbox{vec}(\bX-\bY) \big)^{0.79},
\eee
\normalsize
provided that the following assumptions are satisfied:
\begin{itemize}
\item $||\bX||_{F} \leq 0.8$, which can be easily satisfied by re-scaling the data.
\item $\lambda_1 \leq 0.28\sum_{i=2}^{d_w}\lambda_i$, which can be easily satisfied for large $d_w$.
\end{itemize}
\end{lemma}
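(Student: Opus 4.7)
My plan is to split the proof into (i) deriving the closed-form kernel identity (\ref{eq:kb}) and (ii) sandwiching it between the two powers of the Gaussian kernel. For (i), I would first expand the product of two bilinear RFFs via $2\cos\alpha\cos\beta = \cos(\alpha-\beta)+\cos(\alpha+\beta)$ with $\alpha=\bw^\top\bX\bv+b$ and $\beta=\bw^\top\bY\bv+b$; the sum-frequency term carries a residual $2b$ phase, so it vanishes under the outer $\E_b$ with $b\sim\mbox{Unif}[0,2\pi]$. What remains is $\E_{w,v}[\cos(\bw^\top(\bX-\bY)\bv)]$. I would evaluate this iteratively: conditioning on $\bv$, the scalar $\bw^\top(\bX-\bY)\bv$ is a centered Gaussian of variance $\|(\bX-\bY)\bv\|_2^{2}$, so its characteristic function at $1$ produces $\exp\!\big(-\tfrac{1}{2}\bv^\top(\bX-\bY)^\top(\bX-\bY)\bv\big)$. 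The outer $\E_v$ is then the standard Gaussian quadratic-form MGF and yields $\det\!\big(\bI+(\bX-\bY)^\top(\bX-\bY)\big)^{-1/2}$. Passing through the SVD of $\bX-\bY$, the nonzero eigenvalues of $(\bX-\bY)^\top(\bX-\bY)$ coincide with those of $\Delta$, giving the advertised product $\prod_j(1+\lambda_j)^{-1/2}$.

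For (ii), I would rewrite $\kappa_g(\mbox{vec}(\bX-\bY))=\exp\!\big(-\tfrac{1}{2}\|\bX-\bY\|_F^{2}\big)=\exp\!\big(-\tfrac{1}{2}\sum_j\lambda_j\big)$, so both bounds collapse to scalar comparisons between $\log(1+\lambda_j)$ and $\lambda_j$, summed over $j$. The lower bound $\kappa_g\leq\kappa_b$ is equivalent to $\sum_j\log(1+\lambda_j)\leq\sum_j\lambda_j$, which is immediate from the elementary inequality $\log(1+x)\leq x$ for $x\geq0$ and requires no assumption. The upper bound $\kappa_b\leq\kappa_g^{0.79}$ reduces to $\sum_j\log(1+\lambda_j)\geq 0.79\sum_j\lambda_j$. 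Because $\log(1+x)/x$ is monotonically decreasing, there is a threshold $x^\star\in(0.56,0.6)$ at which it crosses $0.79$, so it suffices to verify $\lambda_j\leq x^\star$ for every $j$. This is where the two side assumptions enter: the Frobenius bound (applied to both $\bX$ and $\bY$) gives $\|\bX-\bY\|_F\leq 1.6$, hence $\sum_j\lambda_j=\tr(\Delta)\leq 2.56$; then the spectral-gap hypothesis $\lambda_1\leq 0.28\sum_{i\geq 2}\lambda_i$ rearranges to $\lambda_1\leq\tfrac{0.28}{1.28}\tr(\Delta)\leq 0.56\leq x^\star$, and since $\lambda_j\leq\lambda_1$ for all $j$ the per-index inequality propagates uniformly. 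Summing closes the bound.

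The principal obstacle is this last chain of numerical constants: the exponent $0.79$ is tuned so tightly to the thresholds $0.8$ and $0.28$ in the hypotheses that there is very little slack, and one has to verify $\log(1+0.56)\geq 0.79\cdot 0.56$ explicitly (which holds, but barely). A secondary subtlety worth spelling out is the rectangular SVD identification between the eigenvalues of $(\bX-\bY)^\top(\bX-\bY)$ and of $\Delta=(\bX-\bY)(\bX-\bY)^\top$, which is what makes the determinant formula indexable by the eigenvalues of $\Delta$ regardless of whether $d_w\neq d_v$. Everything else in the argument is routine manipulation of Gaussian moments and elementary real-analytic inequalities, so once the numerical thresholds line up the lemma follows.
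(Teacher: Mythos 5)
Your proposal is correct and follows essentially the same route as the paper: iterated Gaussian integration to get the determinant formula $|\bI+\Delta|^{-1/2}=\prod_j(1+\lambda_j)^{-1/2}$, then the scalar comparisons $\log(1+x)\le x$ for the lower bound and $\log(1+x)\ge 0.79\,x$ on $[0,0.56]$ (with $\lambda_1\le\frac{0.28}{1.28}\,\mathrm{tr}(\Delta)\le 0.56$ from the two hypotheses) for the upper bound. The only cosmetic difference is that you integrate out $\bw$ first and pass through the Gram matrix $(\bX-\bY)^\top(\bX-\bY)$, whereas the paper integrates out $\bv$ first so that $\Delta$ appears directly; your explicit treatment of the $b$-average and of the rearrangement giving $\lambda_1\le 0.56$ fills in steps the paper leaves implicit.
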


\begin{proof}
\bee
\lefteqn{ \E_{w, v, b} \Big[\phi_{w, v}(\bX)\phi_{w, v}(\bY) \Big] } \\
    & = & \int \int \cos \Big(\bw^\top(\bX-\bY)\bv \Big)p(\bw)p(\bv) \, d\bw d\bv\\
    & = & \int  \kappa_g \Big((\bX-\bY)^\top\bw \Big) p(\bw) \, d\bw \\
    & = & (2\pi)^{-\frac{d_w}{2}} \int
    \exp \left\{ -\frac{\bw^\top(\bI + \Delta) \bw}{2} \right\} \, d\bw \\
    & = & \big| \bI + \Delta \big|^{-\frac{1}{2}},
\eee
where $|\cdot|$ denotes the determinant of a matrix.
The eigen-decomposition of $\Delta$ is given by $\Delta = \bU \bLambda \bU^\top$,
where $\bU$ and $\bLambda$ are eigenvector and eigenvalue matrices, respectively.
Then we have
\bee
\big| \bI + \Delta \big|^{-\frac{1}{2}} & = & \big| \bU(\bI+\bLambda)\bU^\top \big|^{-\frac{1}{2}} \\
    & = & \prod_{j}(1+\lambda_j)^{-\frac{1}{2}}.
\eee
Now we prove the following inequalities:
\small
\bee
    \qquad \kappa_g(\mbox{vec}(\bX-\bY)) \leq \kappa_{b}(\bX-\bY) \leq \kappa_g(\mbox{vec}(\bX-\bY))^{0.79}.
\eee
\normalsize

\noindent
{\em Lower bound:}
First, we can easily show the lower bound on $\kappa_{b}(\bX-\bY)$
with the following inequality: $ \kappa_g(\mbox{vec}(\bX-\bY)) = \prod_j \mbox{exp}(\lambda_j)^{-\frac{1}{2}}
\leq \prod_j (1+\lambda_j)^{-\frac{1}{2}} \triangleq \kappa_{b}(\bX-\bY)$,
because $1+\lambda_j \leq \mbox{exp}(\lambda_j)$.

\noindent
{\em Upper bound:}
Second, assuming that $||\bX||_{F} \leq 0.8$, we can derive the upper bound on $\kappa_{b}(\bX-\bY)$. Now, we can bound $\lambda_j$ with the following logic.
\small
\bee
    & & \mbox{tr}\left[(\bX-\bY)(\bX-\bY)^\top \right]
     \leq  (2*0.8)^2  = 2.56 \\
    &\Rightarrow & \sum_k \lambda_k \leq 2.56 \\
    & \Rightarrow & \lambda_1 \leq 0.56 \quad (\because \lambda_k \geq 0,
    \lambda_1 \leq  0.28\sum_{i=2}^{d_w} \lambda_i).
\eee
\normalsize
For $0\leq \lambda_j \leq 0.56$,
we know that $\mbox{exp}(\lambda_j)^{0.79} \leq 1+\lambda_j$,
leading to the upper bound on $\kappa_{b}(\bX-\bY)$, i.e., $\kappa_{b}(\bX-\bY) \leq \kappa_g(\mbox{vec}(\bX-\bY))^{0.79}$.
\end{proof}

\begin{figure*}[ht]
\begin{center}
\subfigure[4-by-4 matrix]{\includegraphics[scale=0.22]{./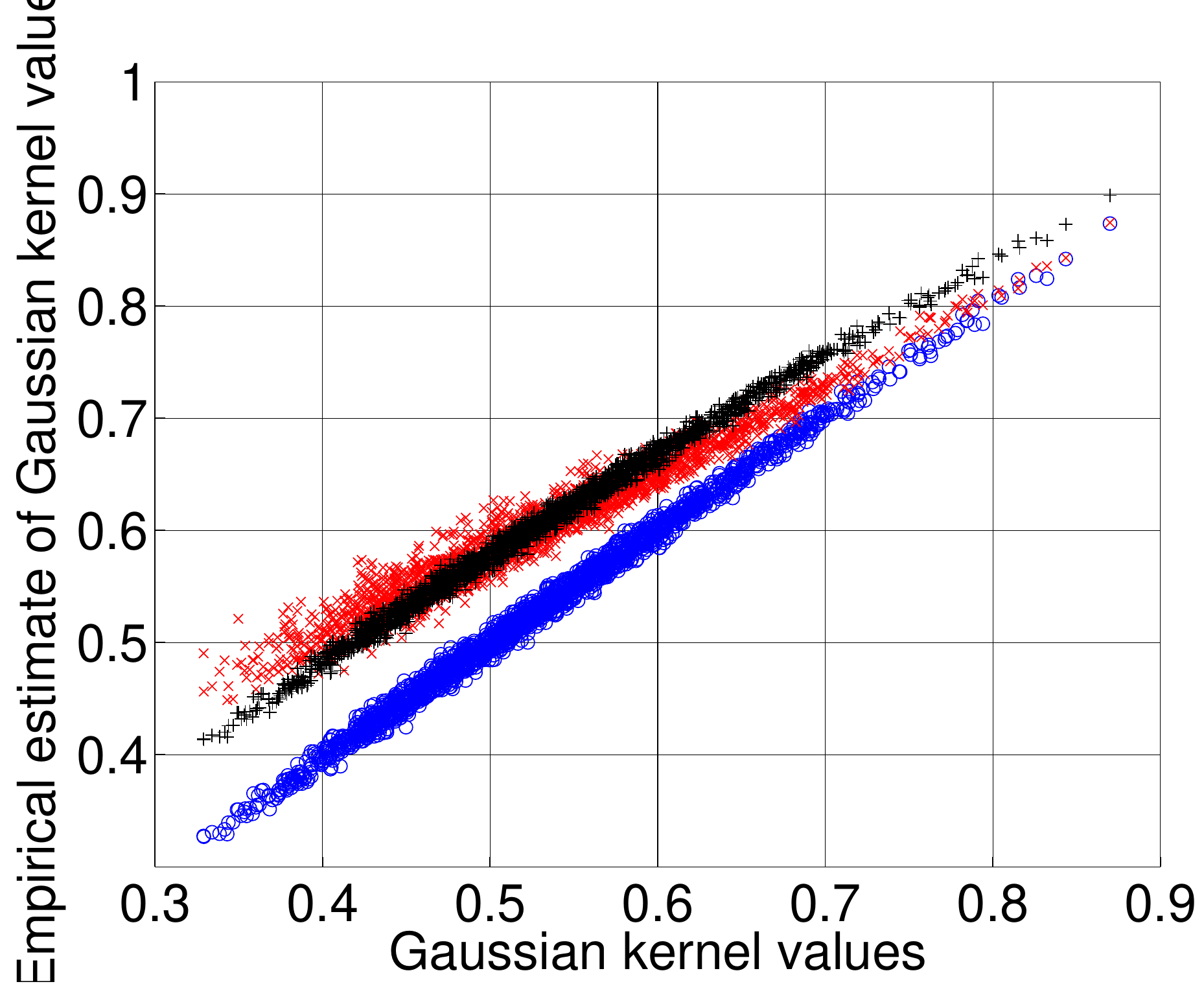}}
\subfigure[8-by-8 matrix]{\includegraphics[scale=0.22]{./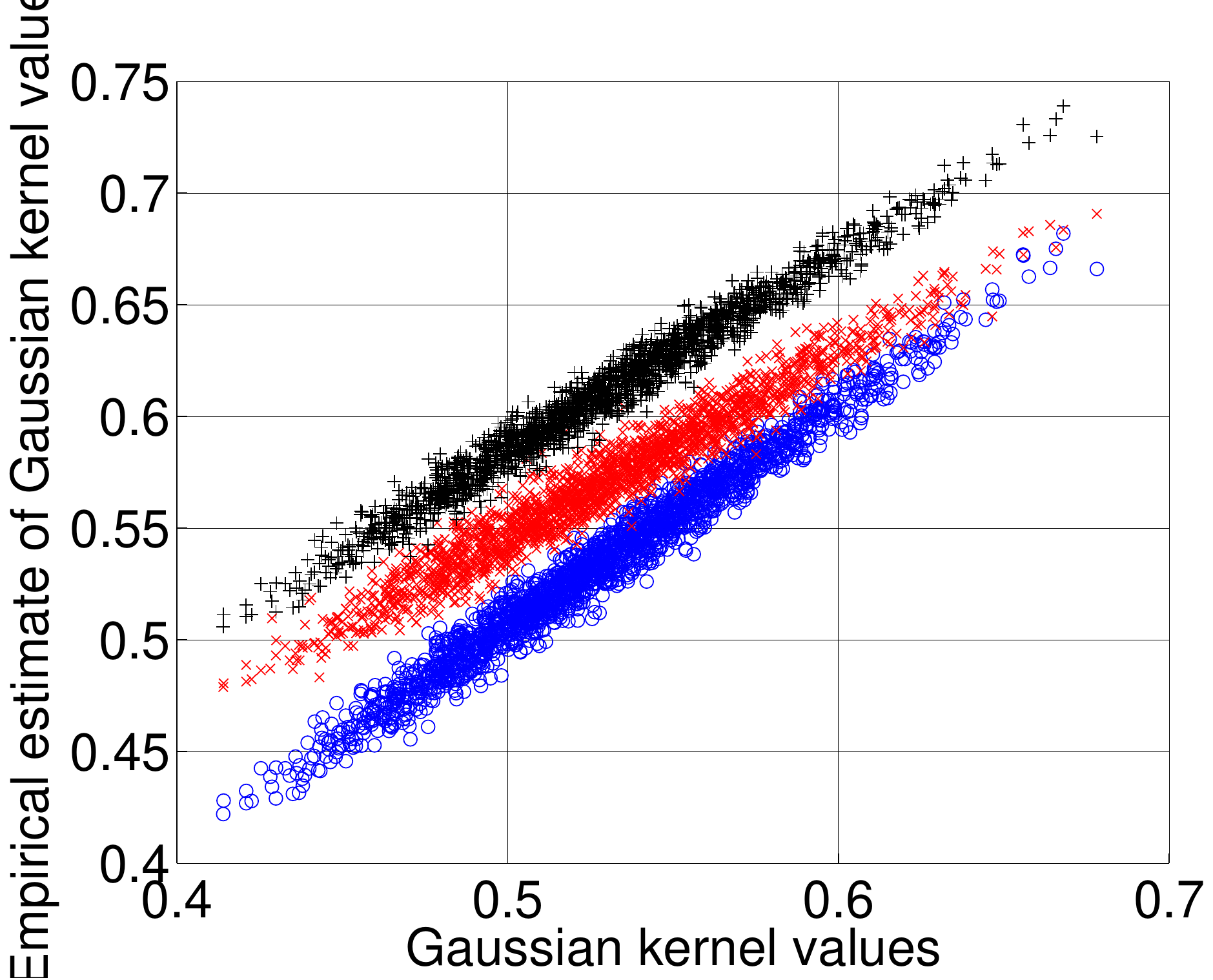}}
\subfigure[15-by-15 matrix]{\includegraphics[scale=0.22]{./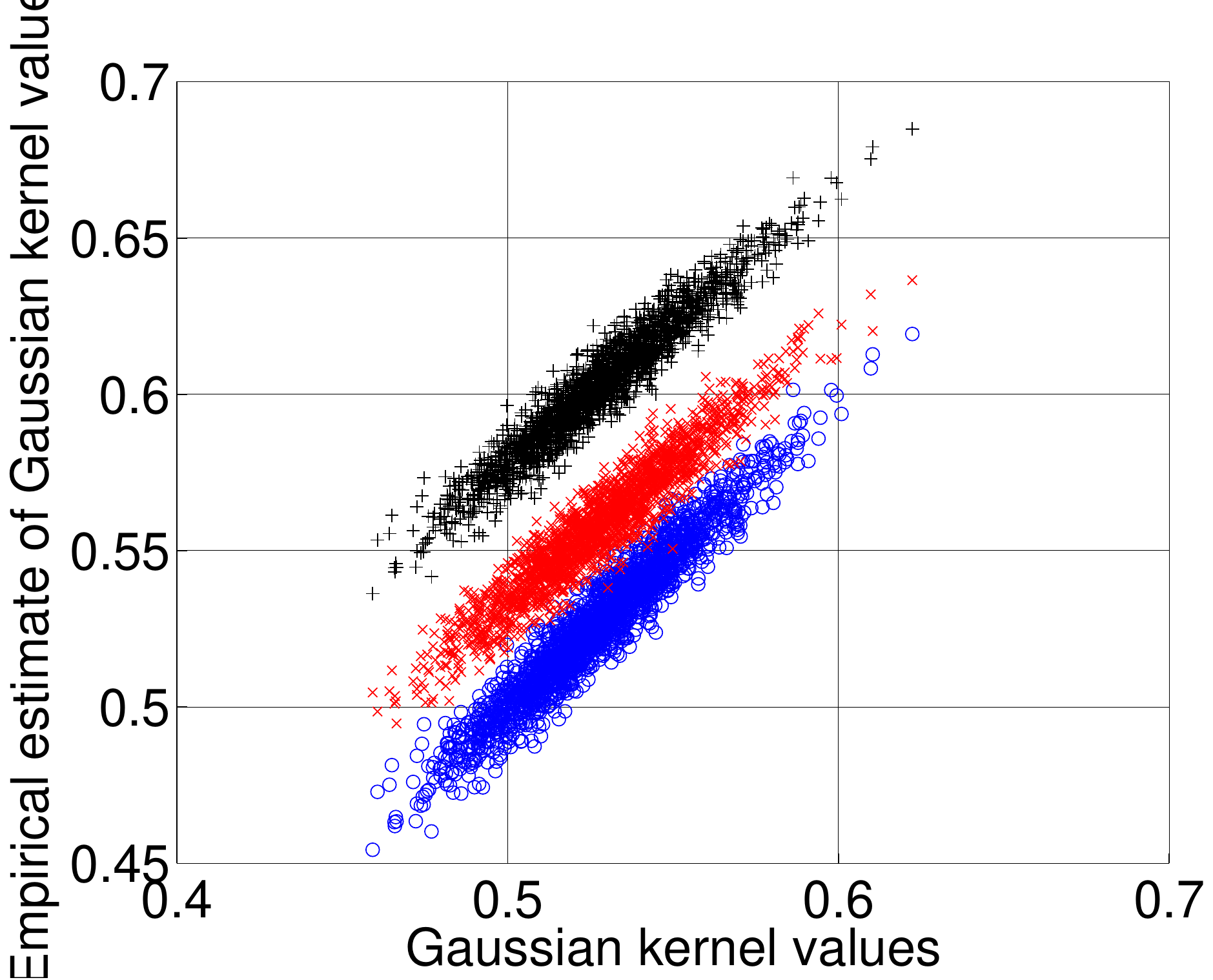}}
\subfigure[25-by-25 matrix]{\includegraphics[scale=0.22]{./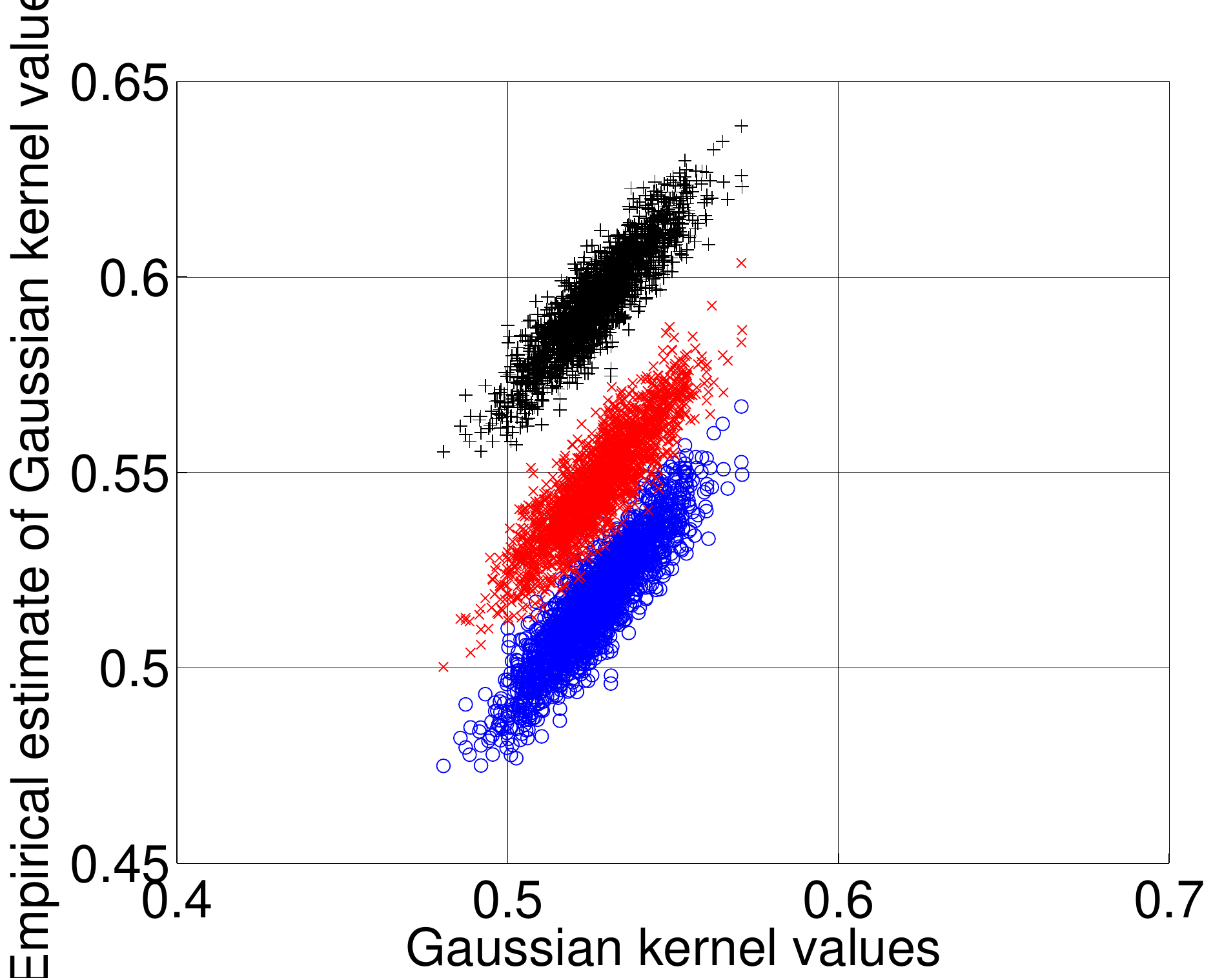}}
\end{center}
\caption{Estimates of bilinear RFF $(\kappa_b(\cdot))$ and its lower/upper bounds ($k_g(\cdot)$ and $k_g(\cdot)^{0.79}$) with respect to Gaussian kernel values. Red marks represent the inner products of two data points induced by bilinear RFF, and blue (black) marks represent its lower (upper) bounds.}
\label{fig:bilinear_RFF}
\end{figure*}

Lemma \ref{lem:rffbl} indicates that random Fourier features with bilinear
projections are related to the one with single projection in case of Gaussian kernel.
Due to this relation, we can conclude that random Fourier features with bilinear projections can generate similarity-preserving binary codes in the following section.
Finally, we summarize some important properties of $\kappa_{b}(\bX-\bY)$, showing that
$\kappa_{b}(\bX-\bY)$ shares the similar properties with $\kappa_g(\bX-\bY)$:

\begin{itemize}
\item {\bf Property 1}: $\qquad 0 \leq \kappa_{b}(\bX - \bY) \leq 1.$
\item {\bf Property 2}: $\qquad \kappa_{b}(m\bX - m\bY) \leq \kappa_{b}(\bX - \bY),$
where $m$ is a positive integer.
\end{itemize}

Fig. \ref{fig:bilinear_RFF} demonstrates that
the inner product of two data points
induced by bilinear RFF is upper and lower bounded with respect to Gaussian kernel
as shown in Lemma \ref{lem:rffbl}.
For the high $d_w$, the upper bound is satisfied in Fig. \ref{fig:bilinear_RFF} (c-d),
which is consistent with our intuition.

For Fig. \ref{fig:bilinear_RFF}, we generate the data from an uniform distribution
with different dimensions, and re-scale the data to be $||\bX||_{F}=0.8$.
To compute the estimates of bilinear RFF, we independently generate 10,000 triples $\{w_i, v_i, b_i\}$
and calculate the following sample average:
$\sum_{i=1}^{k}\big[\phi_{w_i,v_i}(\bX)\phi_{w_i, v_i}(\bY)\big]$, where $k=10,000$.
For the estimates of RFF, we calculate the sample average with 10,000
independently generated pairs $\{w_i, b_i \}$.

\subsection{Bounds on Expected Hamming Distance}
\label{subsec:bounds_ehd}

In this section, we derive the upper and lower bounds on the expected Hamming
distance between binary codes computed by BLSH-SIK to show that BLSH-SIK can generate similarity-preserving
binary codes in the sense of Gaussian kernel.
Lemma \ref{lem:ehd_blsh} is a slight modification
of the expected Hamming distance by LSH-SIK with a single projection \cite{RaginskyM2009nips},
indicating that the expected Hamming distance is analytically represented.

\begin{figure}[t]
\begin{center}
\includegraphics[scale=0.28]{./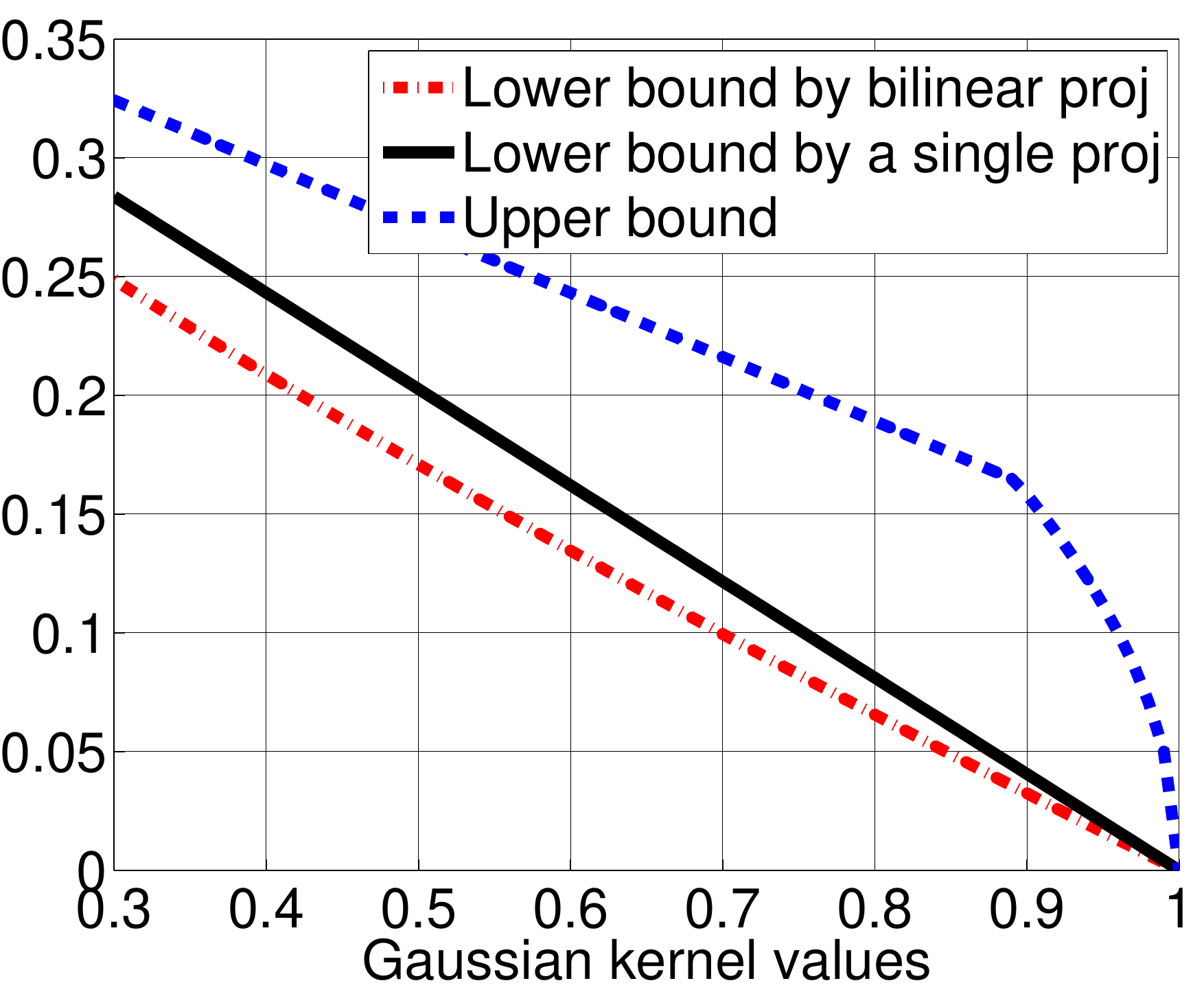}
\end{center}
\caption{Upper and lower bounds on the expected Hamming distance between binary codes computed by
BLSH-SIK and LSH-SIK.}
\label{fig:bounds}
\end{figure}

\begin{lemma}
\label{lem:ehd_blsh}
\bee
\lefteqn{  \E_{\bw, \bv, b, t} \Big[ \calI \big [ h(\bX) \neq h(\bY) \big] \Big] } \nonumber \\
& = &  \frac{8}{\pi^2} \sum_{m=1}^{\infty}\frac{1 - \kappa_{b} \big(m\bX-m\bY \big)}{4m^2-1}.
\eee
\normalsize
\end{lemma}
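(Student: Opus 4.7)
The plan is to follow the same template as the Raginsky--Lazebnik proof for the linear LSH-SIK, reducing the Hamming-distance expectation to a Fourier average that can be expressed through $\kappa_b$, but being careful with the bilinear form of the argument. First I would condition on $\bw, \bv, b$ and integrate out $t \sim \mathrm{Unif}[-1,1]$. Writing $\alpha = \bw^\top \bX \bv + b$ and $\beta = \bw^\top \bY \bv + b$, the hash bits $h(\bX)$ and $h(\bY)$ disagree precisely when $t$ falls between $-\cos\alpha$ and $-\cos\beta$, so
$$
\mathbb{P}\big[h(\bX) \neq h(\bY) \mid \bw,\bv,b \big] = \tfrac{1}{2}\big|\cos\alpha - \cos\beta\big|.
$$

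Next I would use the product-to-sum identity $\cos\alpha - \cos\beta = -2\sin\!\big(\tfrac{\alpha+\beta}{2}\big)\sin\!\big(\tfrac{\alpha-\beta}{2}\big)$, writing $s = \bw^\top\bX\bv$ and $r = \bw^\top\bY\bv$, and then average over $b \sim \mathrm{Unif}[0,2\pi]$. Since a uniform shift makes $|\sin(\cdot + b)|$ integrate to $\tfrac{2}{\pi}$, this yields
$$
\E_{b}\big[|\cos\alpha - \cos\beta|\big] = \tfrac{4}{\pi}\,\big|\sin(\tfrac{s-r}{2})\big|.
$$
Now I would insert the standard Fourier series $|\sin x| = \tfrac{2}{\pi} - \tfrac{4}{\pi}\sum_{m\ge1}\tfrac{\cos(2mx)}{4m^2-1}$, getting a sum of terms proportional to $\E_{\bw,\bv}[\cos(m\bw^\top(\bX-\bY)\bv)]$.

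The crucial identification is that each such term equals $\kappa_b(m\bX - m\bY)$. To see this, I would apply the same product-to-sum identity to $\phi_{w,v}(m\bX)\phi_{w,v}(m\bY) = \cos(m\bw^\top(\bX-\bY)\bv) + \cos(m\bw^\top(\bX+\bY)\bv + 2b)$; the $b$-average kills the second term, so by the definition of $\kappa_b$ in Lemma~1 (applied with the rescaling $\bX \mapsto m\bX$, $\bY \mapsto m\bY$) the remaining expectation is exactly $\kappa_b(m\bX - m\bY)$. Assembling the pieces gives $\E[\calI[h(\bX)\neq h(\bY)]] = \tfrac{4}{\pi^2} - \tfrac{8}{\pi^2}\sum_{m\ge 1}\tfrac{\kappa_b(m\bX-m\bY)}{4m^2-1}$.

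Finally I would invoke the telescoping identity $\sum_{m=1}^{\infty}\tfrac{1}{4m^2-1} = \tfrac{1}{2}$ (from the partial-fraction decomposition $\tfrac{1}{4m^2-1} = \tfrac{1}{2}(\tfrac{1}{2m-1}-\tfrac{1}{2m+1})$), which rewrites $\tfrac{4}{\pi^2}$ as $\tfrac{8}{\pi^2}\sum_{m\ge 1}\tfrac{1}{4m^2-1}$ and combines the two series into the claimed $\tfrac{8}{\pi^2}\sum_{m\ge 1}\tfrac{1-\kappa_b(m\bX-m\bY)}{4m^2-1}$. I expect the only real subtlety to be the justification of exchanging the infinite Fourier sum with the expectation over $\bw,\bv$: because $|\sin(\tfrac{s-r}{2})|$ is bounded by $1$, dominated convergence (or, equivalently, uniform convergence of the Cesaro-averaged Fourier series) handles this cleanly, and everything else is mechanical once the bilinear analogue of the RFF inner product from Lemma~1 is in hand.
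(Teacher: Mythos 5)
Your proposal is correct and follows essentially the same route as the paper's own proof: integrate out $t$ to get $\tfrac{1}{2}\E|\cos(\bw^\top\bX\bv+b)-\cos(\bw^\top\bY\bv+b)|$, use the product-to-sum identity and the $b$-average to reduce to $\tfrac{2}{\pi}\E_{\bw,\bv}|\sin(\bw^\top(\bX-\bY)\bv/2)|$, expand $|\sin|$ in its Fourier series, and identify $\E_{\bw,\bv}[\cos(m\bw^\top(\bX-\bY)\bv)]$ with $\kappa_b(m\bX-m\bY)$ via the computation in Lemma~1. The only differences are cosmetic (you use the split form of the Fourier series and recombine with $\sum_m \tfrac{1}{4m^2-1}=\tfrac12$, and you spell out the dominated-convergence justification that the paper leaves implicit), so there is nothing to correct.
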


\begin{proof}
This is a slight modification of the result (for a randomized linear map) in \cite{RaginskyM2009nips}.
Since the proof is straightforward, it is placed in the supplementary material.
\end{proof}

Though the expected Hamming distance is analytically
represented with respect to $\kappa_{b}$,
its relationship with $\kappa_{g}$ is not fully exploited.
In order to figure out the similarity-preserving property of BLSH-SIK
in a more clear way, Theorem \ref{thm:bounds_blsh} is described
to show the upper and lower bounds on
the expected Hamming distance for BLSH-SIK in terms of $\kappa_g \big(\mbox{vec}(\bX-\bY) \big)$.

\begin{theorem}
\label{thm:bounds_blsh}
Define the functions
\bee
    g_1(\zeta) & \triangleq & \frac{4}{\pi^2} \left(1-\zeta^{0.79} \right), \\
    g_2(\zeta) & \triangleq & \min \left\{ \frac{1}{2}\sqrt{1-\zeta},
    \frac{4}{\pi^2} \left(1-\frac{2}{3} \zeta \right) \right\},
\eee
where $\zeta \in [0,1]$ and  $g_1(0)=g_2(0)=\frac{4}{\pi^2}$, $g_1(1)=g_2(1)=0$.
Gaussian kernel $\kappa_g$ is shift-invariant, normalized, and satisfies $\kappa_g(\alpha \bx - \alpha \by) \leq \kappa_g(\bx-\by)$
for any $\alpha \geq 1$.
Then the expected Hamming distance between any two embedded points computed by bilinear LSH-SIK satisfies
\be
    g_1 \Big(\kappa_g \big( \btau \big)  \Big) \leq  \E \Big[ \calI \big [ h(\bX) \neq h(\bY) \big] \Big]
    \leq g_2 \Big(\kappa_g \big(\btau \big) \Big),
\ee
where $\btau = \mbox{vec}(\bX-\bY)$.
\end{theorem}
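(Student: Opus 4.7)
The plan is to combine the exact series from Lemma~\ref{lem:ehd_blsh} with the two-sided bounds on $\kappa_b$ from Lemma~\ref{lem:rffbl}, reducing the upper bound to the original Raginsky--Lazebnik estimate in Theorem~\ref{thm:bounds_lsh} applied to the Gaussian kernel. Throughout set $\btau=\mbox{vec}(\bX-\bY)$ and exploit the identity $\kappa_g(m\btau)=\kappa_g(\btau)^{m^2}$ together with the telescoping sum $\sum_{m=1}^{\infty}\frac{1}{4m^2-1}=\frac{1}{2}$.

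For the lower bound, I would work term by term inside the series of Lemma~\ref{lem:ehd_blsh}. Property~2 of $\kappa_b$ gives $\kappa_b(m\bX-m\bY)\le \kappa_b(\bX-\bY)$, and the upper half of Lemma~\ref{lem:rffbl} gives $\kappa_b(\bX-\bY)\le \kappa_g(\btau)^{0.79}$, so $1-\kappa_b(m\bX-m\bY)\ge 1-\kappa_g(\btau)^{0.79}$ for every $m\ge 1$. Pulling out this constant and invoking the telescoping identity yields
\[
\E\bigl[\calI[h(\bX)\neq h(\bY)]\bigr]\;\ge\;\frac{4}{\pi^2}\bigl(1-\kappa_g(\btau)^{0.79}\bigr)\;=\;g_1(\kappa_g(\btau)).
\]

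For the upper bound, I would apply the \emph{lower} half of Lemma~\ref{lem:rffbl} to the rescaled pair $(m\bX,m\bY)$ so that $\kappa_b(m\bX-m\bY)\ge \kappa_g(m\btau)=\kappa_g(\btau)^{m^2}$. Substituting into Lemma~\ref{lem:ehd_blsh} gives
\[
\E\bigl[\calI[h(\bX)\neq h(\bY)]\bigr]\;\le\;\frac{8}{\pi^2}\sum_{m=1}^{\infty}\frac{1-\kappa_g(\btau)^{m^2}}{4m^2-1}.
\]
The right-hand side is precisely the expected Hamming distance produced by the original LSH-SIK hash (\ref{eq:h}) with a single linear projection under the Gaussian kernel, thanks to $\kappa_g(m\btau)=\kappa_g(\btau)^{m^2}$. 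Invoking Theorem~\ref{thm:bounds_lsh} with $\kappa=\kappa_g$ bounds this quantity above by $g_2(\kappa_g(\btau))$, which completes the proof.

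The delicate step is invoking Lemma~\ref{lem:rffbl} at the rescaled pair $(m\bX,m\bY)$: for large $m$ the Frobenius-norm hypothesis $\|m\bX\|_F\le 0.8$ is violated and one cannot quote the full lemma as a black box. Fortunately, inspecting its proof, the lower direction $\kappa_g(\mbox{vec}(\bX-\bY))\le \kappa_b(\bX-\bY)$ only uses $1+\lambda_j\le e^{\lambda_j}$ for $\lambda_j\ge 0$, which is unconditional because the eigenvalues of $\Delta=(\bX-\bY)(\bX-\bY)^\top$ are nonnegative. Thus the termwise substitution is valid for every $m$, and the reduction to the single-projection bound of Theorem~\ref{thm:bounds_lsh} goes through cleanly.
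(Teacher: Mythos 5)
Your proof is correct and follows essentially the same route as the paper: the lower bound is identical (Property 2, the upper half of Lemma~\ref{lem:rffbl}, and the telescoping sum $\sum_m\frac{1}{4m^2-1}=\frac12$), while your upper bound --- termwise domination $\kappa_b(m(\bX-\bY))\ge\kappa_g(\btau)^{m^2}$ reducing the series of Lemma~\ref{lem:ehd_blsh} to the linear Gaussian LSH-SIK case and then quoting Theorem~\ref{thm:bounds_lsh} --- is a minor repackaging of the paper's step of running the Raginsky--Lazebnik Lemma~2.3 argument on $\kappa_b$ and then using $\kappa_g(\btau)\le\kappa_b(\btau)$ with the monotonicity of $g_2$. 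Your remark that the lower half of Lemma~\ref{lem:rffbl} holds unconditionally (only $1+\lambda\le e^{\lambda}$ is used), so the rescaled pairs $(m\bX,m\bY)$ violating $\|m\bX\|_F\le 0.8$ cause no trouble, is correct and is in fact a point of rigor the paper glosses over.
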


\begin{proof}
We prove the upper and lower bound one at a time, following the technique used in \cite{RaginskyM2009nips}.
Note that the lower bound $g_1(\zeta)$ is slightly different from the one in Theorem \ref{thm:bounds_lsh},
however the upper bound $g_2(\zeta)$ is the same as the one in Theorem \ref{thm:bounds_lsh}.

\noindent
{\em Lower bound:} It follows from Property 2 and Lemma \ref{lem:rffbl} that
we can easily find the lower bound as
\small
\bee
    \E \Big[ \calI \big [ h(\bX) \neq h(\bY) \big] \Big]& \geq  &
    \frac{4}{\pi^2} \Big(1-\kappa_{b} \big(\mbox{vec}(\bX-\bY) \big) \Big)\\
    &\geq & \frac{4}{\pi^2} \Big(1-\kappa_{g}\big(\mbox{vec}(\bX-\bY)\big)^{0.79} \Big)\\
    & \triangleq & g_1 \Big(\kappa_g \big(\mbox{vec}(\bX-\bY) \big) \Big).
\eee
\normalsize

\noindent
{\em Upper bound:} By the proof of Lemma 2.3 \cite{RaginskyM2009nips},
we can easily find the upper bound as
\bee
\lefteqn{ \E \Big[ \calI \big [ h(\bX) \neq h(\bY) \big] \Big] } \\
 & \leq & \min\left\{ \frac{1}{2}\sqrt{1-\kappa_{b} \big( \btau \big)},
  \frac{4}{\pi^2} \left(1-\frac{2}{3}\kappa_{b} \big( \btau \big) \right) \right \}.
\eee
Moreover, the inequality $\kappa_g \big( \btau \big) \leq \kappa_{b} \big( \btau \big)$
in Lemma \ref{lem:rffbl} yields
\bee
\lefteqn{ \E \Big[ \calI \big [ h(\bX) \neq h(\bY) \big] \Big] } \\
& \leq & \min \left\{ \frac{1}{2}\sqrt{1-\kappa_g \big( \btau \big)},
\frac{4}{\pi^2} \left(1-\frac{2}{3}\kappa_g \big( \btau \big) \right) \right \}, \\
& \triangleq & g_2 \Big(\kappa_g \big( \btau \big) \Big).
\eee
\end{proof}

Theorem \ref{thm:bounds_blsh} shows that bilinear projections can generate similarity-preserving binary codes,
where the expected Hamming distance is upper and lower bounded in terms of $\kappa_g \big(\mbox{vec}(\bX-\bY) \big)$.
Compared with the original upper and lower bounds in case of a single projection shown in the Lemma 2.3 \cite{RaginskyM2009nips},
we derive the same upper bound and slightly worse lower bound as depicted in Fig. \ref{fig:bounds}.

\subsection{Bounds on Covariance}
\label{subsec:bounds_covariance}

\begin{figure}[ht]
\begin{center}
\includegraphics[scale=0.27]{./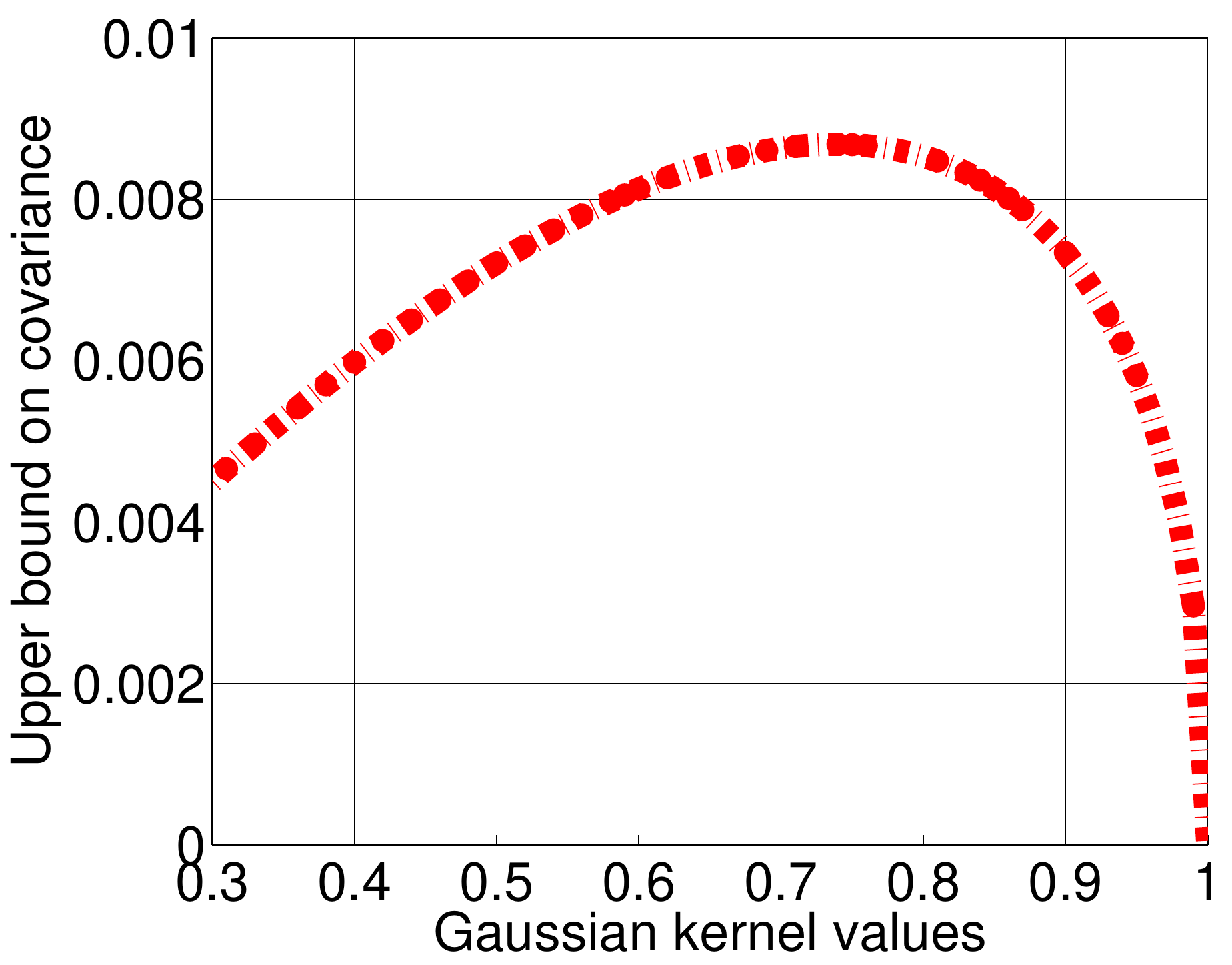}
\end{center}
\caption{Upper bound on covariance between the two bits induced by BLSH-SIK.
Horizontal axis suggests a Gaussian kernel value of two data points.
Vertical axis shows an upper bound on covariance.}
\label{fig:upper_cov}
\end{figure}

In this section, we analyze the covariance between two bits induced
by BLSH-SIK to address how much the performance would be
dropped compared with a single large projection matrix.

A hash function for multiple bits using bilinear projections (\ref{eq:Hbl})
implies that there exists the bits which share one of the projection vectors.
For example, assume that $h_i(\cdot)$ is given as
\be
    h_{i}(\bX) = \mbox{sgn} \Big(\cos(\bw_{1}^\top\bX\bv_{i}+b_i)+t_i \Big).
\ee
We can easily find the following $d_{v}-1$ hash functions
which shares $\bw_{1}$ with $h_{i}(\cdot)$.
\be
    h_{j}(\bX) = \mbox{sgn} \Big(\cos(\bw_{1}^\top\bX\bv_{j}+b_j)+t_j \Big),
\ee
where $j\in\{1,\cdots,d_{v}\}\setminus\{i\}$.

If the two bits does not share any one of projection vectors, the bits should be independent
which indicates a zero correlation. This phenomenon raises a natural question to ask that
how much the two bits, which share one of projection vectors, are correlated.
Intuitively, we expect that the highly correlated bits are not favorable,
because such bits does contain redundant information to approximate $ \E \Big[ \calI \big [ h(\bX) \neq h(\bY) \big] \Big] $.
Theorem \ref{thm:cov} shows that the upper bound on covariance between two bits
induced by bilinear projections is small, establishing the reason why
BLSH-SIK performs well enough in case of a large number of bits.

\begin{theorem}
\label{thm:cov}
Given the hash functions as Eq. (12-13), the upper bound on the covariance between
the two bits is derived as
\bee
    \mbox{cov} (\cdot) & \leq & \frac{64}{\pi^4} \left \{ \left( \sum_{m=1}^{\infty}
    \frac{\kappa_{g} \big(\mbox{vec}(\bX-\bY) \big)^{0.79m^2}}{4m^2-1} \right )^{2}  \right. \\
     & & \left. - \left(\sum_{m=1}^{\infty} \frac{\kappa_{g} \big(\mbox{vec}(\bX-\bY) \big)^{m^2}}{4m^2-1}
     \right)^2 \right\},
\eee
\normalsize
where $\kappa_g(\cdot)$ is the Gaussian kernel and
$\mbox{cov}(\cdot)$ is the covariance between two bits defined as
\small
\bee
    \mbox{cov}(\cdot) & =& \E \Big[ \calI \big [ h_i(\bX) \neq h_i(\bY) \big]  \calI \big [ h_j(\bX) \neq h_j(\bY) \big] \Big] \\
    & - & \E \Big[ \calI \big [ h_i(\bX) \neq h_i(\bY) \big] \Big] \E \Big[ \calI \big [ h_j(\bX) \neq h_j(\bY) \big] \Big].
\eee
\normalsize
\end{theorem}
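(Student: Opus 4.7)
The plan is to exploit the conditional independence of the two bits given the shared projection vector $\bw_1$, and then reduce the calculation to a double series that can be bounded by Lemma~\ref{lem:rffbl}.

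First, observe that $\{\bv_i, b_i, t_i\}$ and $\{\bv_j, b_j, t_j\}$ are independent of each other and of $\bw_1$. Consequently, conditioned on $\bw_1$, the two bits $h_i(\bX), h_j(\bX)$ become \emph{conditionally independent}, each depending only on the single random projection $\bv_i$ or $\bv_j$ applied to the vector $\bw_1^{\top}(\bX-\bY) \in \Real^{d_v}$. Applying the single-projection LSH-SIK identity (the vector analogue of Lemma~\ref{lem:ehd_blsh}, due to Raginsky--Lazebnik) I would write
\begin{equation*}
\E\!\left[\calI[h_i(\bX)\neq h_i(\bY)] \mid \bw_1\right] = \frac{8}{\pi^2}\sum_{m=1}^{\infty} \frac{1-\kappa_g(m \bw_1^{\top}(\bX-\bY))}{4m^2-1},
\end{equation*}
and then use conditional independence to get $\E[\calI_i\calI_j] = \E_{\bw_1}\!\left[(\E[\calI_i \mid \bw_1])^2\right]$.

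Next, I would expand the square as a double sum over $(m,n)$ and compute the two Gaussian expectations over $\bw_1 \sim \calN(\bzero,\bI)$ that appear. Setting $D = \bX-\bY$, a direct MGF calculation gives
\begin{equation*}
\E_{\bw_1}\!\left[\kappa_g(m\bw_1^{\top}D)\right] = |\bI + m^2 DD^{\top}|^{-1/2} = \kappa_b(mD),
\end{equation*}
and, by combining exponents before integrating,
\begin{equation*}
\E_{\bw_1}\!\left[\kappa_g(m\bw_1^{\top}D)\kappa_g(n\bw_1^{\top}D)\right] = \kappa_b\!\left(\sqrt{m^2+n^2}\,D\right).
\end{equation*}
The product of marginals $\E[\calI_i]\,\E[\calI_j]$ is read off directly from Lemma~\ref{lem:ehd_blsh}. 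Subtracting, the $1$, $\kappa_b(mD)$ and $\kappa_b(nD)$ cross-terms cancel exactly, leaving
\begin{equation*}
\mathrm{cov}(\cdot) = \frac{64}{\pi^4}\sum_{m,n=1}^{\infty}\frac{\kappa_b(\sqrt{m^2+n^2}\,D) - \kappa_b(mD)\,\kappa_b(nD)}{(4m^2-1)(4n^2-1)}.
\end{equation*}

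Finally, I would bound each summand using Lemma~\ref{lem:rffbl} in \emph{opposite directions}: the upper bound $\kappa_b(\sqrt{m^2+n^2}D) \leq \kappa_g(\mathrm{vec}(D))^{0.79(m^2+n^2)}$ on the first term, and the lower bound $\kappa_b(mD)\kappa_b(nD) \geq \kappa_g(\mathrm{vec}(D))^{m^2+n^2}$ on the second. Writing $c = \kappa_g(\mathrm{vec}(\bX-\bY))$, the summand is bounded by $c^{0.79(m^2+n^2)} - c^{m^2+n^2}$, and each of these factorizes as a product over $m$ and $n$, so the double sum telescopes into the difference of the two squared series in the theorem statement.

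The one step that takes real work is the combining-of-exponents identity for $\E_{\bw_1}[\kappa_g(m\bw_1^{\top}D)\kappa_g(n\bw_1^{\top}D)]$, which is what makes the proof go through cleanly; once that is in hand, applying Lemma~\ref{lem:rffbl} to the two terms is routine. A minor subtlety is verifying that the two inequalities pull in the correct direction (it suffices to check that $(1+m^2\lambda)(1+n^2\lambda) \geq 1+(m^2+n^2)\lambda$, which holds termwise in the eigenvalues of $DD^{\top}$), ensuring that we are actually producing an upper and not a lower bound on the covariance.
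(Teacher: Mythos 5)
Your proposal is correct and follows essentially the same route as the paper's supplementary proof: the conditional-independence-given-$\bw_1$ framing is just a repackaging of the paper's step of integrating out $\bv_1,\bv_2$ first, and the key identity $\E_{\bw_1}[\kappa_g(m\bw_1^\top D)\kappa_g(n\bw_1^\top D)] = \kappa_b(\sqrt{m^2+n^2}\,D)$ is exactly the paper's auxiliary lemma, after which the cancellation, the opposite-direction applications of Lemma~\ref{lem:rffbl}, and the factorization of the double sum proceed identically. (Both your argument and the paper's share the same unremarked caveat that Lemma~\ref{lem:rffbl}'s upper bound is being applied to the rescaled arguments $\sqrt{m^2+n^2}\,D$, for which its eigenvalue hypotheses are not re-verified.)
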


\begin{proof}
Since the proof is lengthy and tedious, the detailed proof and
lower bound on the covariance can be found in the supplementary material.
\end{proof}

Fig. \ref{fig:upper_cov} depicts the upper bound on covariance between the two bits
induced by BLSH-SIK with respect to Gaussian kernel value.
We can easily see that the covariance between the two bits for the highly similar ($\kappa_g(\mbox{vec}(\bX-\bY))\approx 1$)
is nearly zero, indicating that there is no correlation between the two bits.
Unfortunately, there exists unfavorable correlation for the data points which is not highly (dis)similar. To remedy such unfavorable correlation, a simple heuristic is proposed,
in which $k\times m^2$ bits are first generated and randomly select the $k$ bits when
$k$ is the desired number of bits and $m$ is a free parameter for reducing the unfavorable correlation
trading-off storage and computational costs.
This simple heuristic reduces the correlation between
the two bits without incurring too much computational and storage costs.
Algorithm \ref{alg:Bilinear_LSH_SIK} summarizes the BLSH-SIK with
the proposed heuristic.

\begin{algorithm}[ht]
\caption{LSH for Shift-invariant Kernels with Bilinear Projections (BLSH-SIK)}
\label{alg:Bilinear_LSH_SIK}
\begin{algorithmic}[1]
\INPUT A data point is $\bX \in \Real^{d_w \times d_v}$, $k$ is the desired number of bits,
$m$ is the hyper-parameter to reduce the correlation, and $I$ is a subset with $k$ elements of $\{1,2,\cdots,m^2\times k\}$.
\OUTPUT A binary code of $\bX$ with $k$ bits.
\STATE $\bW \in \Real^{d_w \times m\sqrt{k}}$ and $\bV \in \Real^{d_v \times m\sqrt{k}}$ are element-wise drawn from the zero-mean Gaussian, $\mathcal{N}(0,1)$.
\STATE $\bb \in \Real^{mk}$ and $\bt \in \Real^{mk}$ are element-wise drawn from uniform distributions, $\mbox{Unif}[0, 2\pi]$ and $\mbox{Unif}[-1, +1]$, respectively.
\STATE Generate a binary code whose the number of bit is $k \times m^2$:
$\frac{1}{2}(1+\mbox{sgn}(\cos(\mbox{vec}(\bW^\top\bX\bV) + \bb) + \bt))$.
\STATE Select the $k$-bits from the binary code using the pre-defined subset $I$.
\end{algorithmic}
\end{algorithm}

\section{Experiments}
\label{sec:experiments}

\begin{figure*}[htp]
\begin{center}
\subfigure[400bits]{\includegraphics[scale=0.23]{./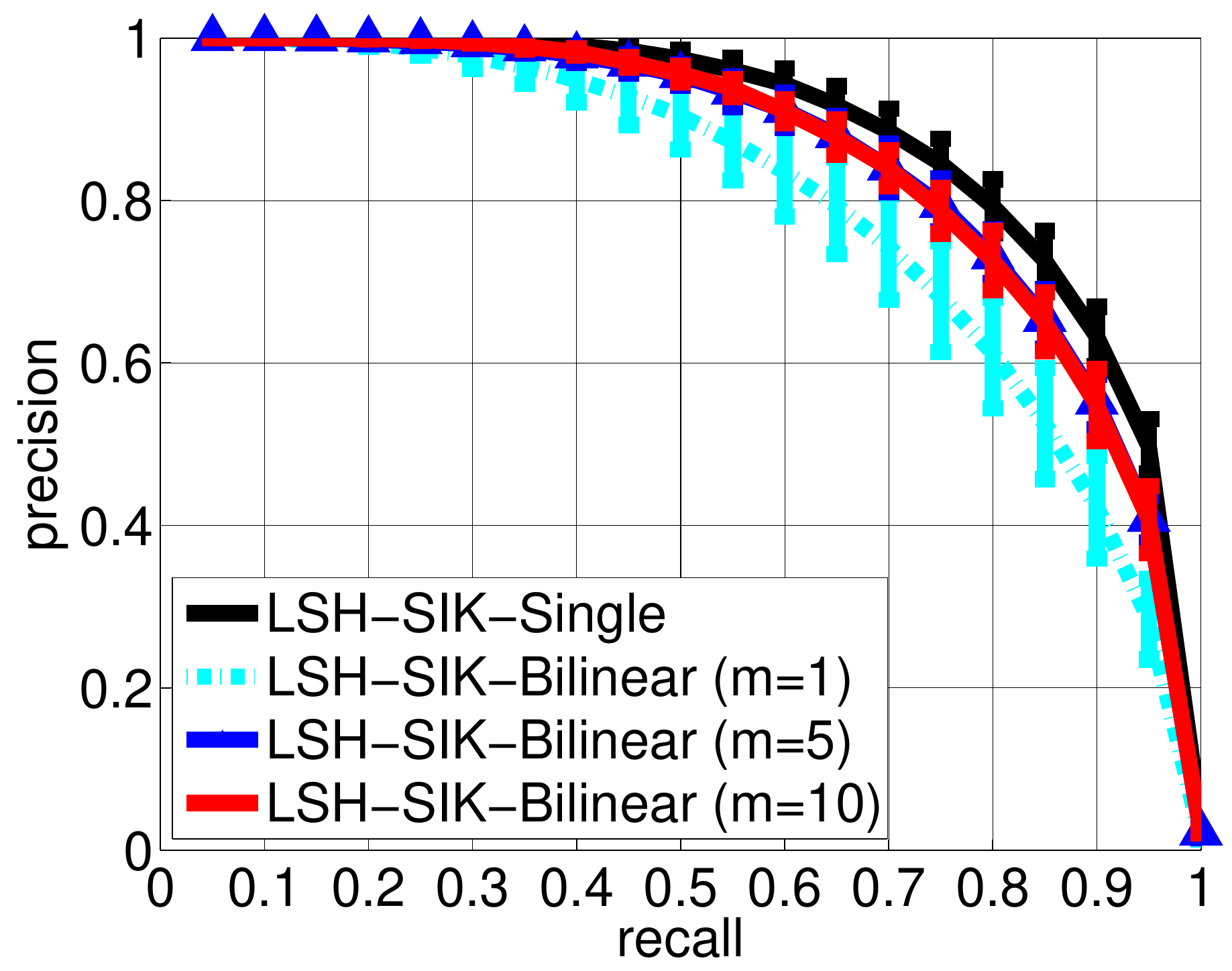}}
\subfigure[900bits]{\includegraphics[scale=0.23]{./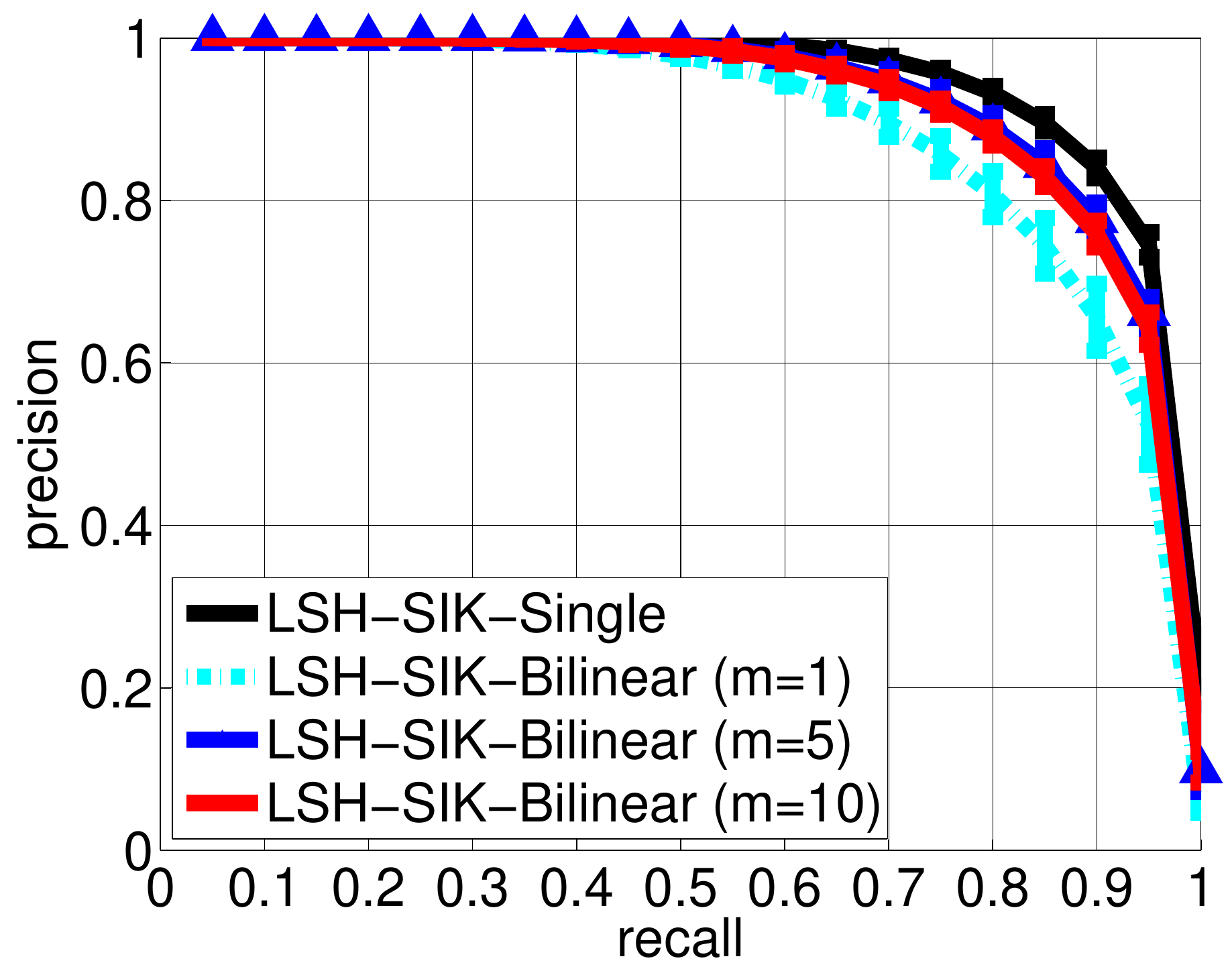}}
\subfigure[1,600bits]{\includegraphics[scale=0.23]{./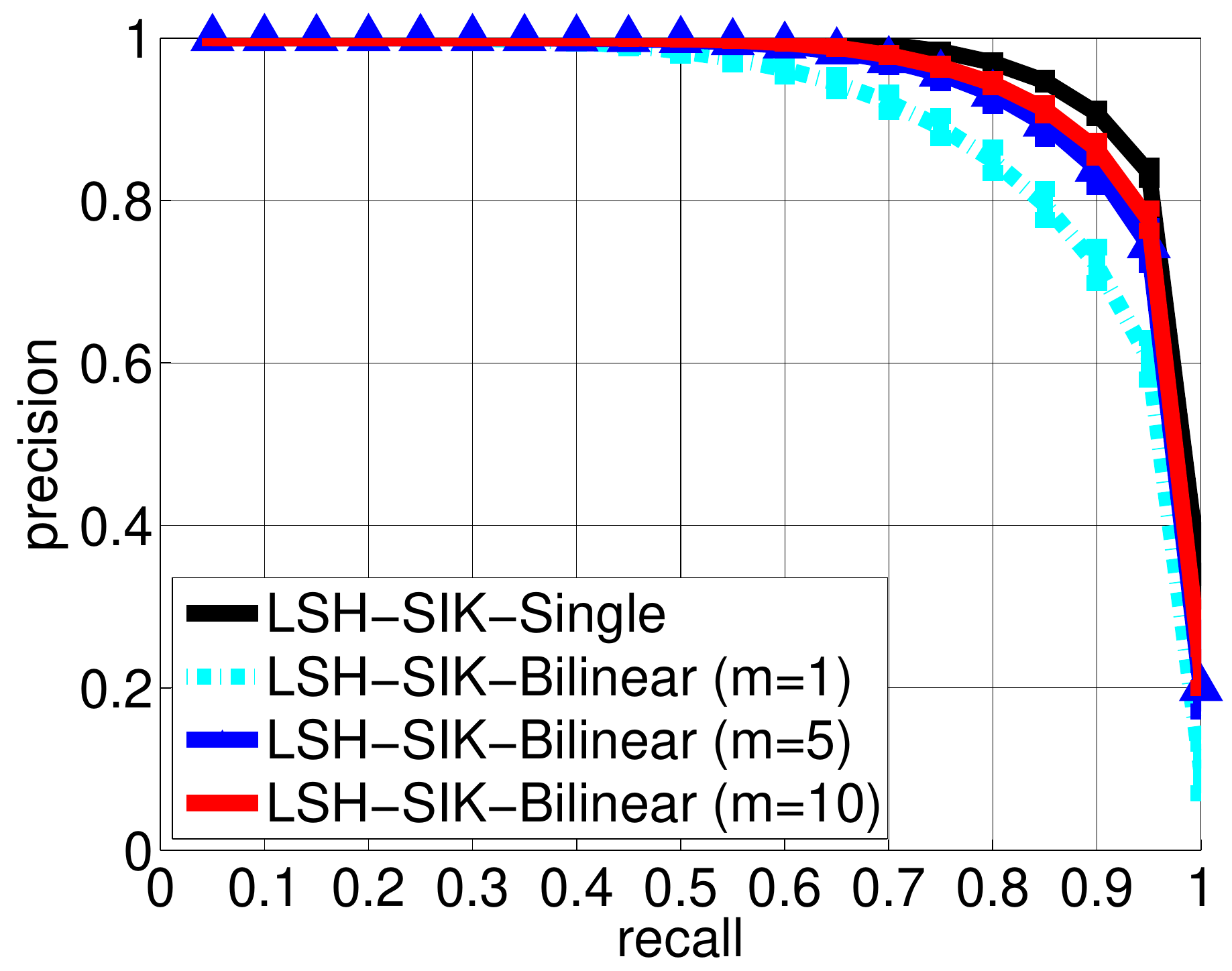}}
\subfigure[2,500bits]{\includegraphics[scale=0.23]{./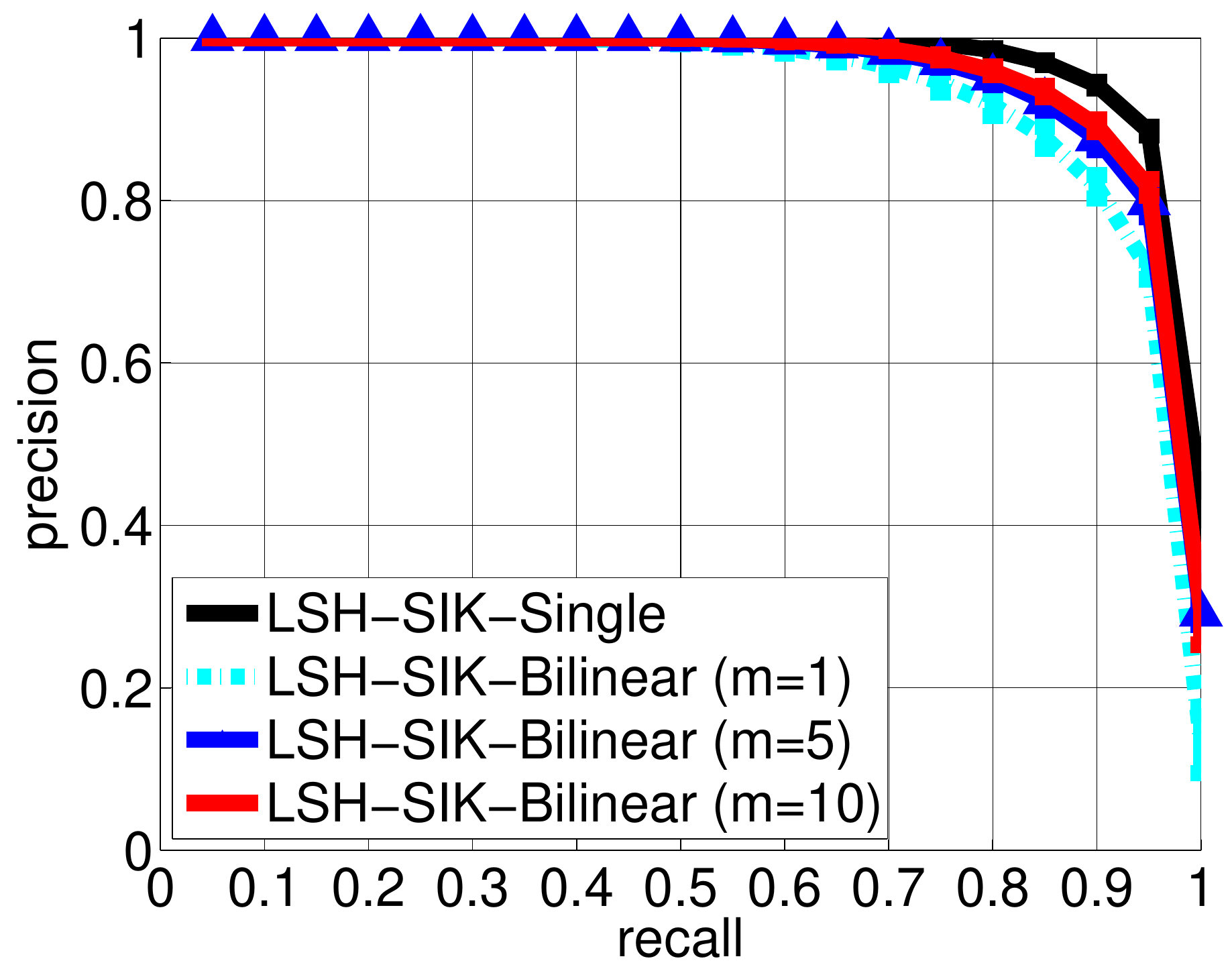}}
\end{center}
\caption{Precision-recall curves for LSH-SIK with a single projection (referred to as LSH-SIK-Single)
and BLSH-SIK (referred to as LSH-SIK-Bilinear) on MNIST with respect to the different number of bits.
In case of BLSH-SIK, the precision-recall curves are plotted for the different $m$,
which is introduced to reduce the correlation in Algorithm 1.}
\label{fig:mnist_comp}
\end{figure*}

In this section, we represent the numerical experimental results
to support the analysis presented in the previous sections,
validating the practical usefulness of BLSH-SIK.
For the numerical experiments, the two widely-used datasets,
MNIST \footnote{http://yann.lecun.com/exdb/mnist/} and Flickr45K \footnote{http://lear.inrialpes.fr/people/jegou/data.php}, are used to investigate the behaviors
of BLSH-SIK from small- to high-dimensional data.
MNIST consists of 70,000 handwritten digit images represented by a 28-by-28 matrix,
where the raw images are used for the experiments.
Flickr45K is constructed by randomly selecting 45,000 images from 1 million Flickr images
used in \cite{JegouH2010cvpr}.
VLAD \cite{JegouH2010cvpr} is used to represent an image with 500 cluster centers, resulting in a
$500 \times 128 = 64,000$ dimensional vector normalized to the unit length with $l_2$ norm.
For BLSH-SIK, we reshape an image into a 250-by-256 matrix.

The ground-truth neighbors should be carefully constructed for
comparing the hashing algorithm in a fair manner.
We adopt the same procedure to construct the ground-truth neighbors presented in \cite{RaginskyM2009nips}. First of all, we decide an appropriate threshold to judge which neighbors should be ground-truth neighbors, where the averaged Euclidean distance between the query and the 50th nearest neighbor is set to the appropriate threshold. Then, the ground-truth neighbor is decided if the distance between the query and the point is less than the threshold. Finally, we re-scale the dataset such that the threshold is one, leading that the scaling parameter for Gauassian kernel can be set to one. For both datasets, we randomly select 300 data points for queries, and the queries which has more than 5,000 ground-truth neighbors are excluded.
To avoid any biased results,
all precision-recall curves in this section are plotted by error bars with mean
and one standard deviation over 5 times repetition.

\begin{figure}[t]
\begin{center}
\subfigure[4,900bits]{\includegraphics[width=0.47\linewidth]
{./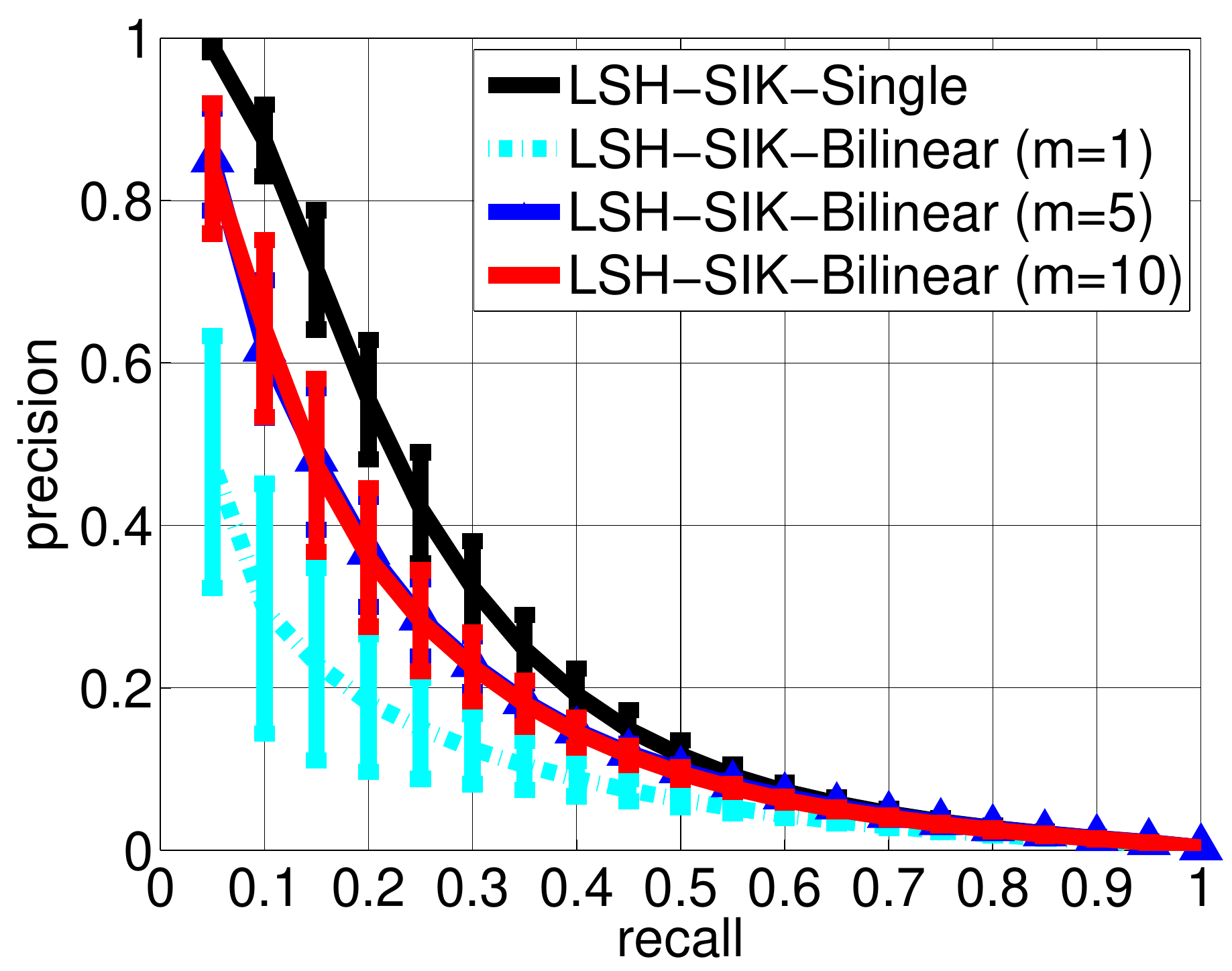}}
\subfigure[6,400bits]{\includegraphics[width=0.47\linewidth]
{./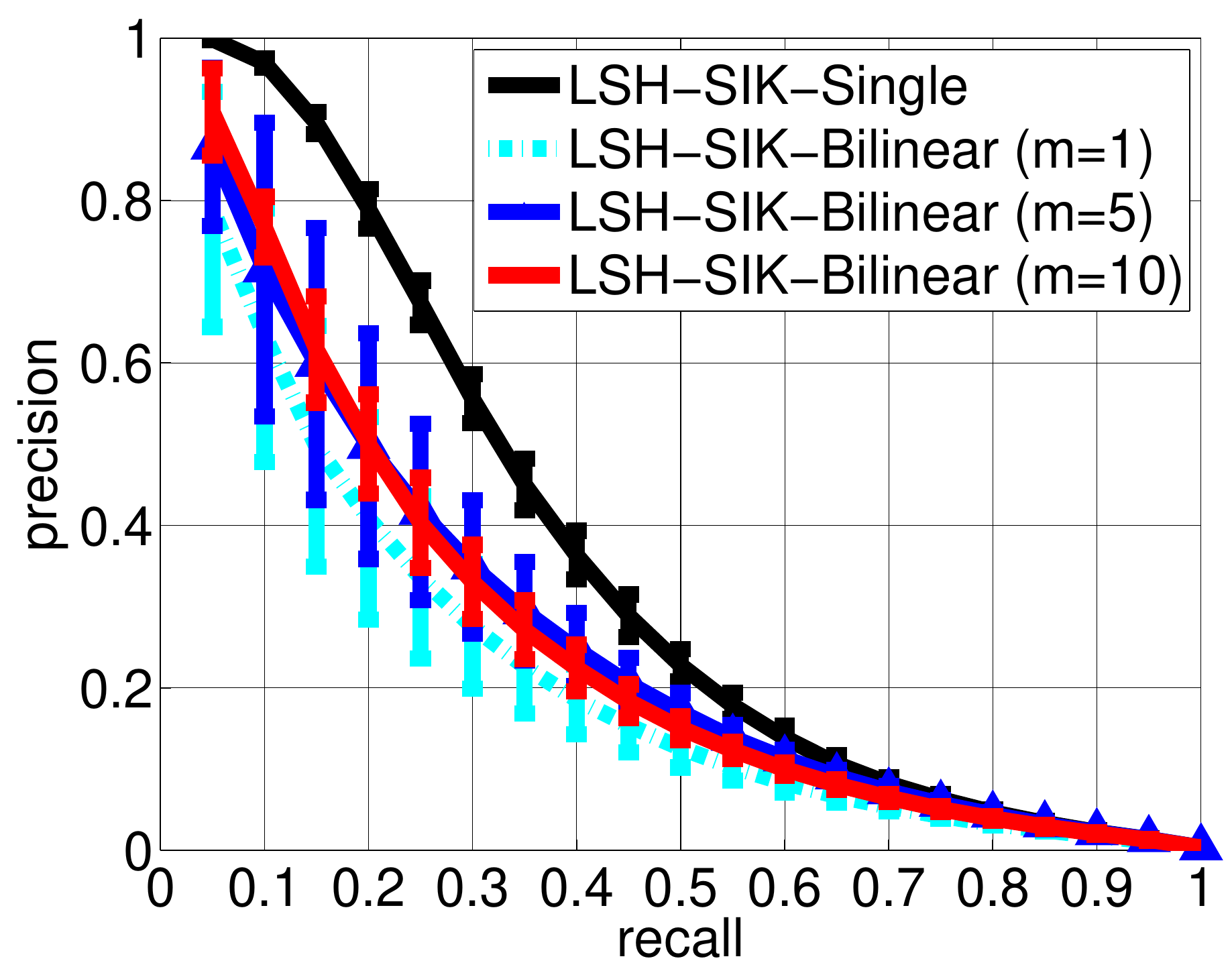}}
\end{center}
\caption{Precision-recall curves for LSH-SIK with a single projection (referred to as LSH-SIK-Single)
and BLSH-SIK (referred to as LSH-SIK-Bilinear)
on Flickr45K with respect to the different number of bits,
where the precision-recall curves for BLSH-SIK
are plotted for the different hyper-parameter $m$.}
\label{fig:flickr_comp}
\end{figure}

\begin{figure}[t]
\begin{center}
\subfigure[Computational time]{\includegraphics[width=0.47\linewidth]
{./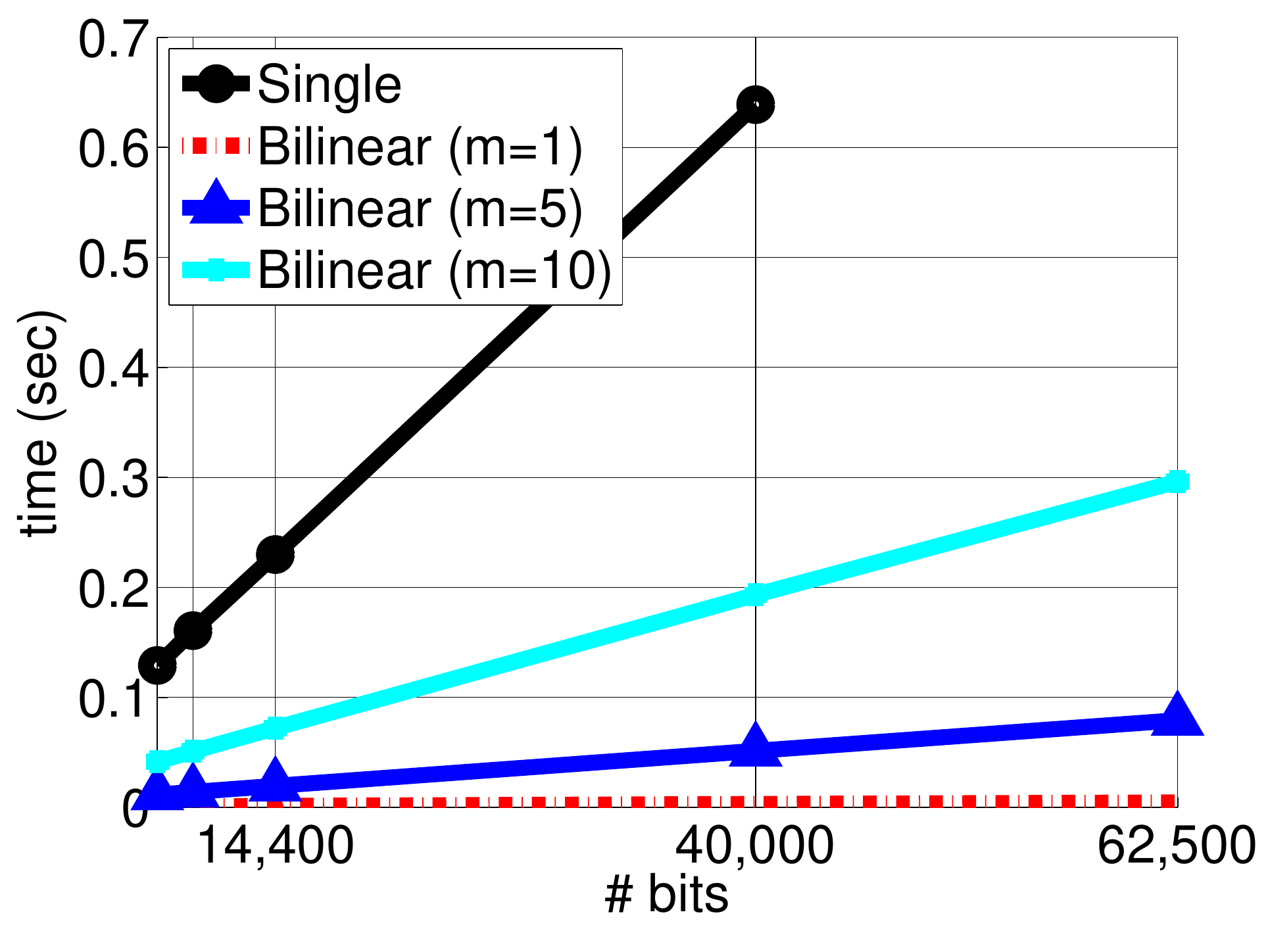}}
\subfigure[Memory consumption]{\includegraphics[width=0.47\linewidth]
{./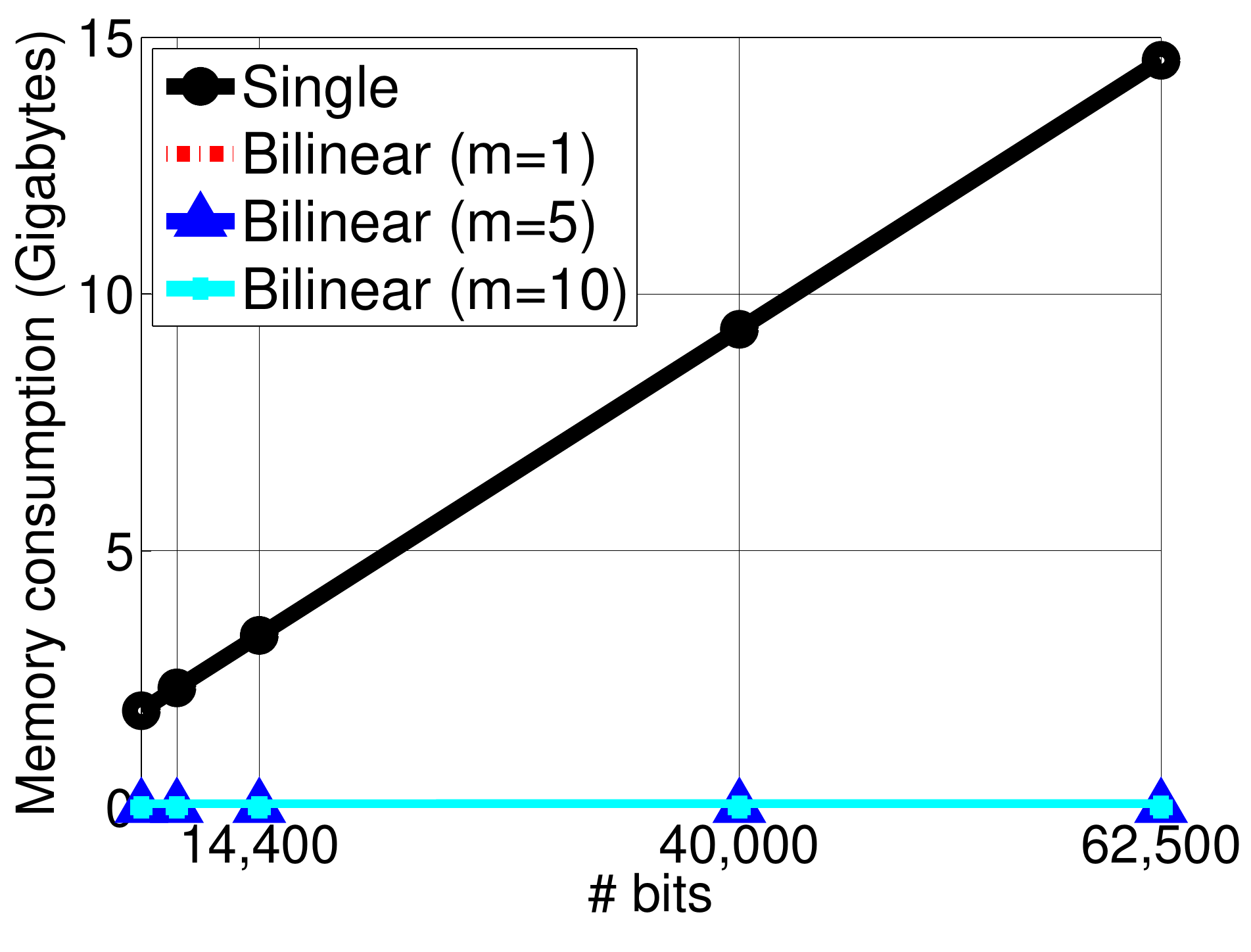}}
\end{center}
\caption{Comparison between LSH-SIK with a single large projection (referred to as Single)
and BLSH-SIK (referred to as Bilinear) in terms of the computational time
and memory consumption on the Flickr45K dataset. }
\label{fig:time_memory_comp}
\end{figure}

\begin{figure*}[htp]
\begin{center}
\subfigure[16,900 bits]{\includegraphics[scale=0.225]{./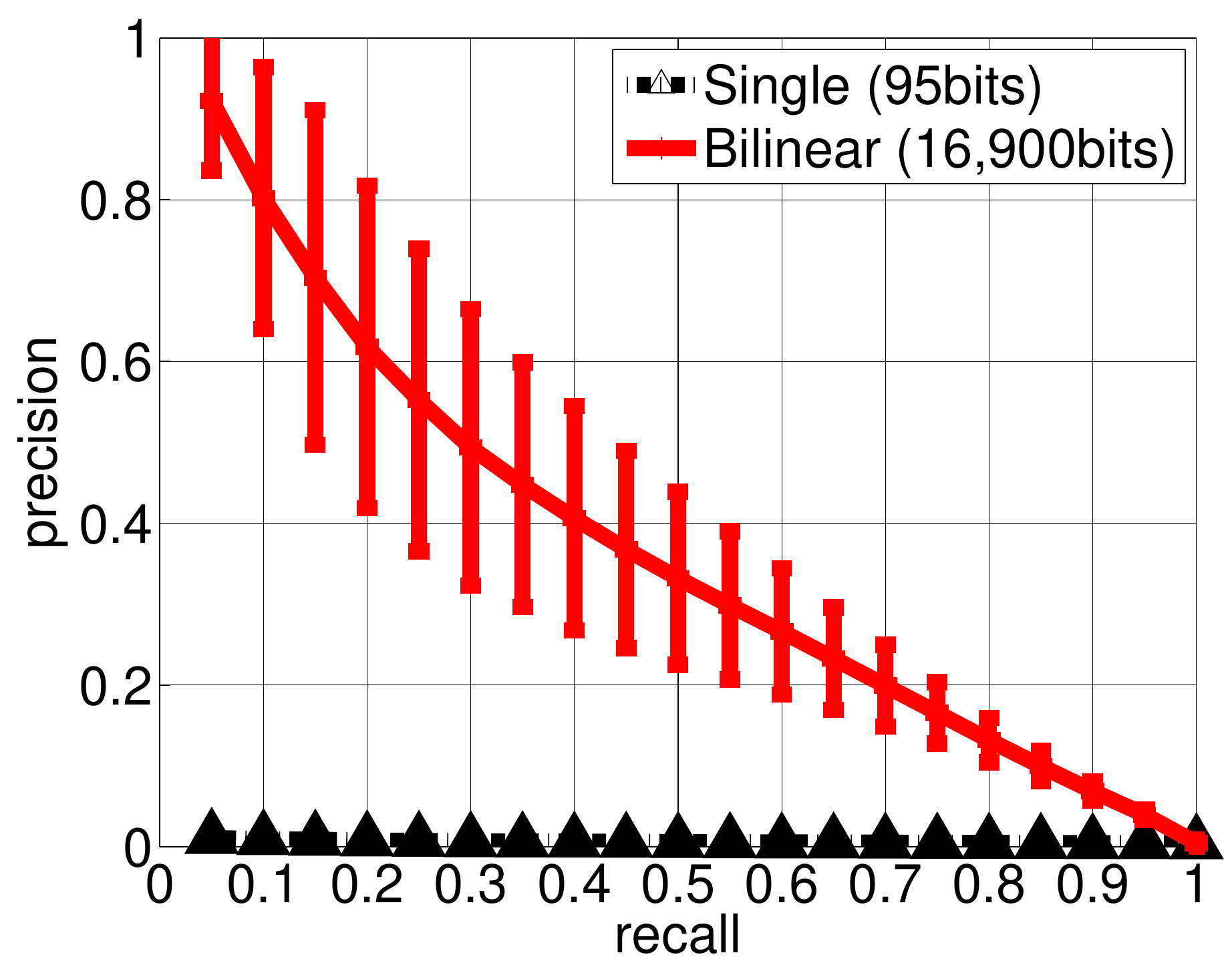}}
\subfigure[25,600 bits]{\includegraphics[scale=0.225]{./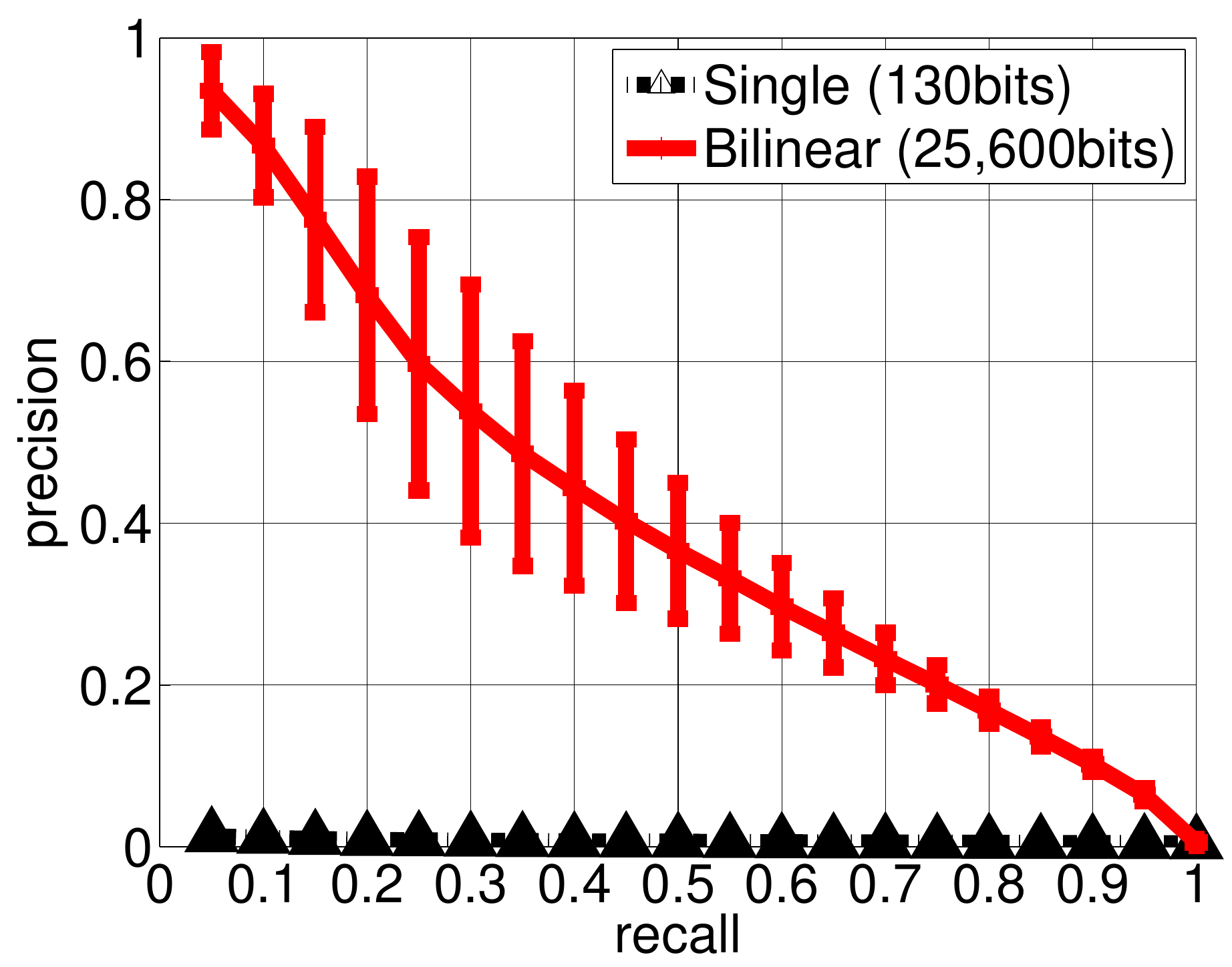}}
\subfigure[40,000 bits]{\includegraphics[scale=0.225]{./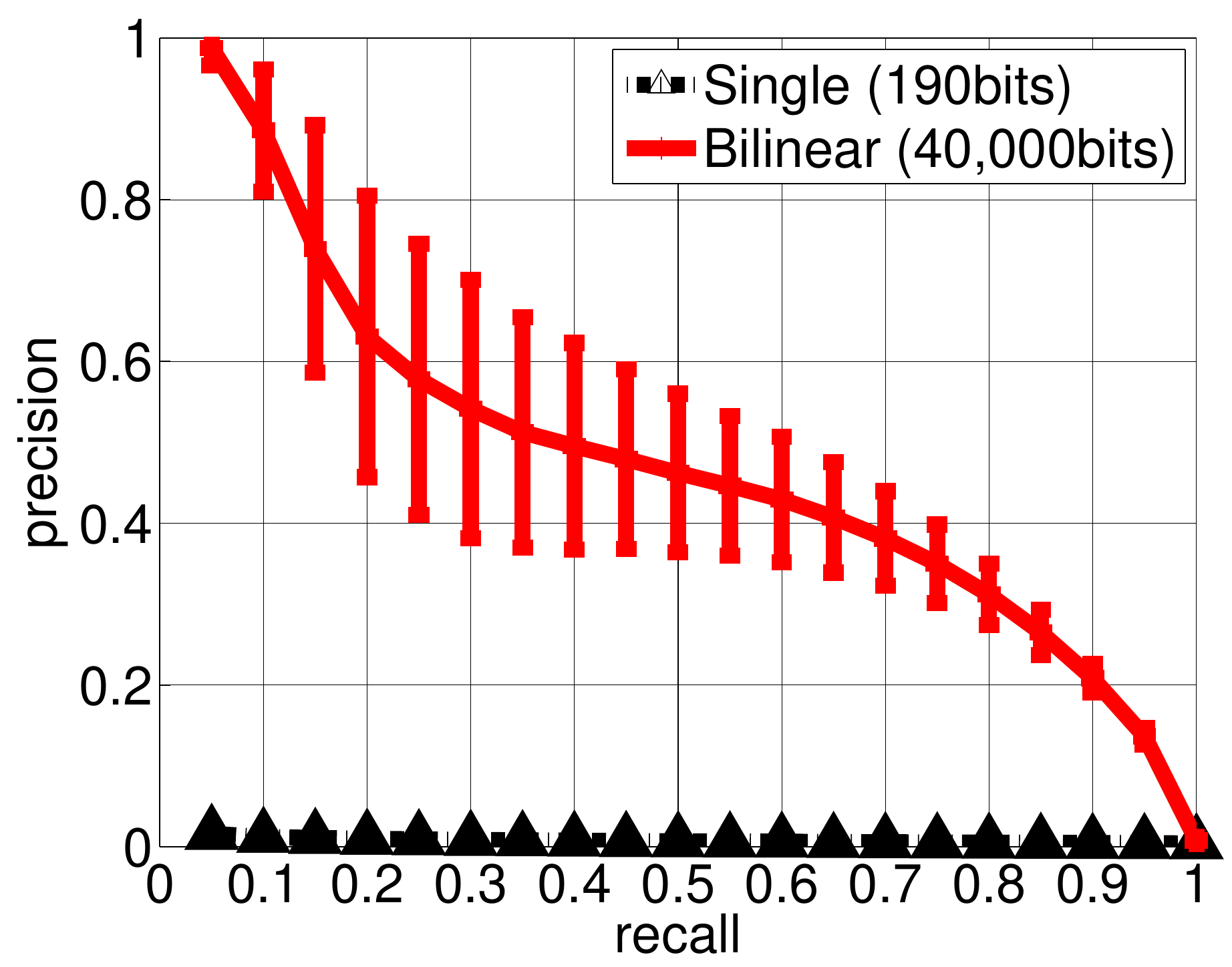}}
\subfigure[62,500 bits]{\includegraphics[scale=0.225]{./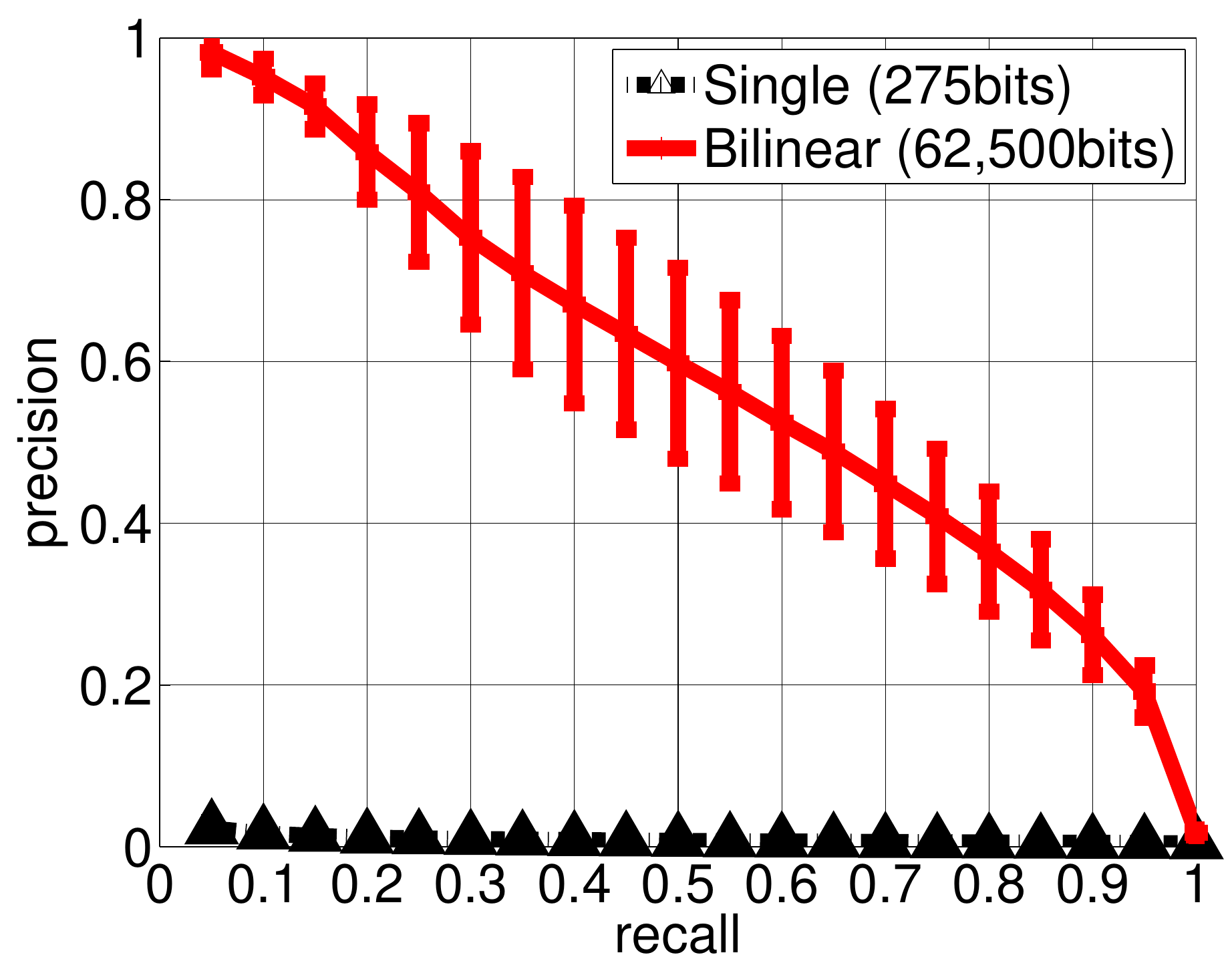}}

\subfigure[16,900 bits]{\includegraphics[scale=0.225]{./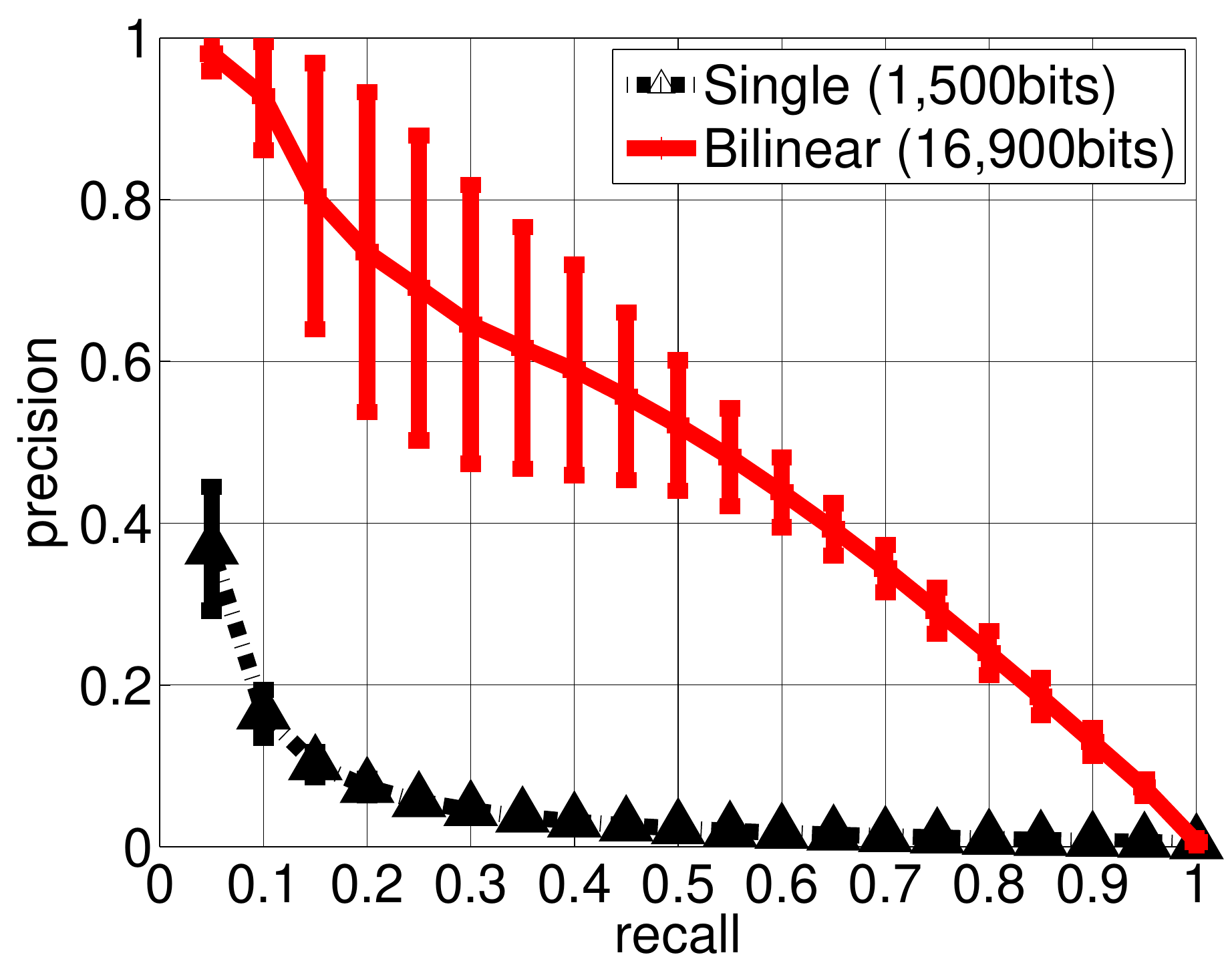}}
\subfigure[25,600 bits]{\includegraphics[scale=0.225]{./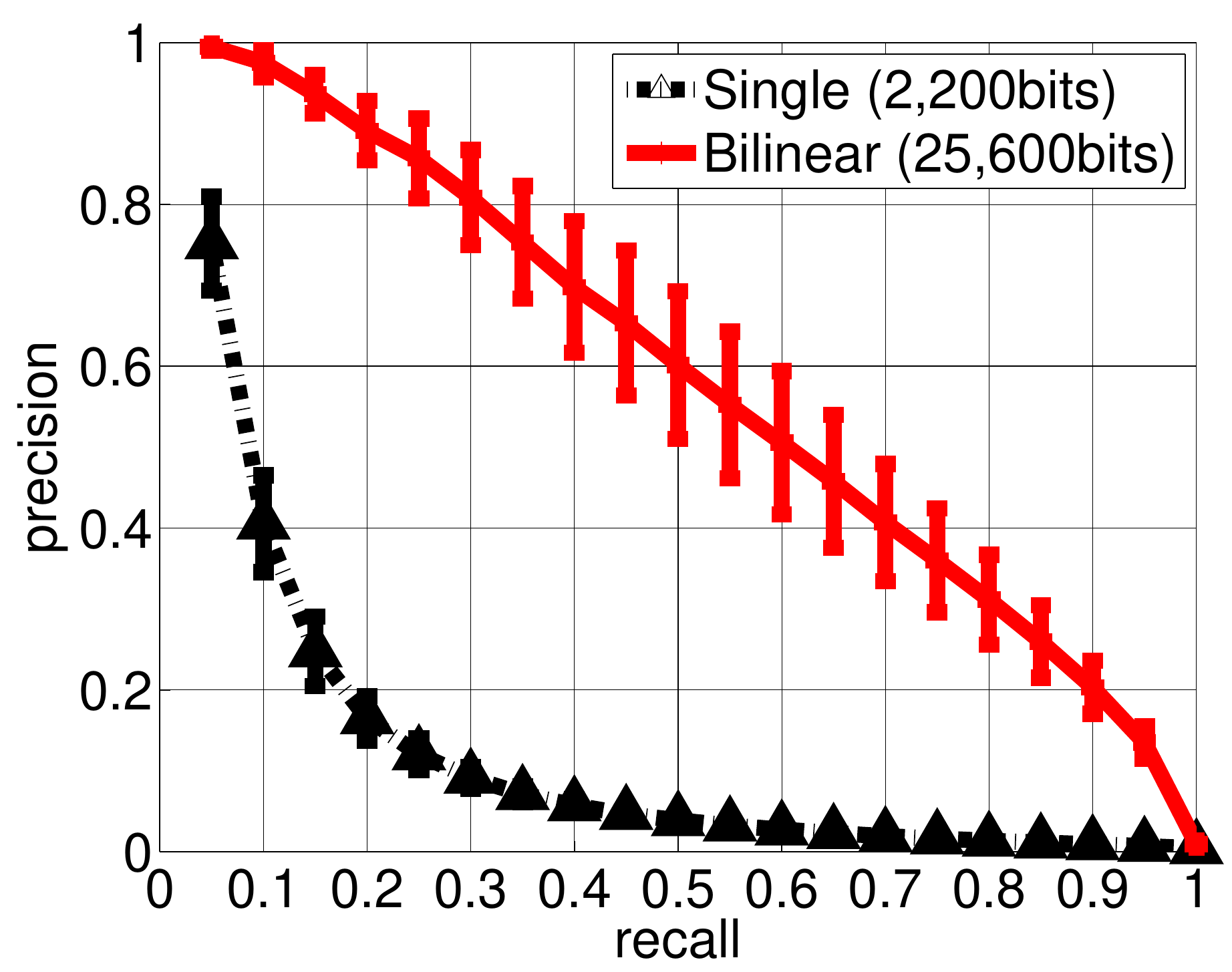}}
\subfigure[40,000 bits]{\includegraphics[scale=0.225]{./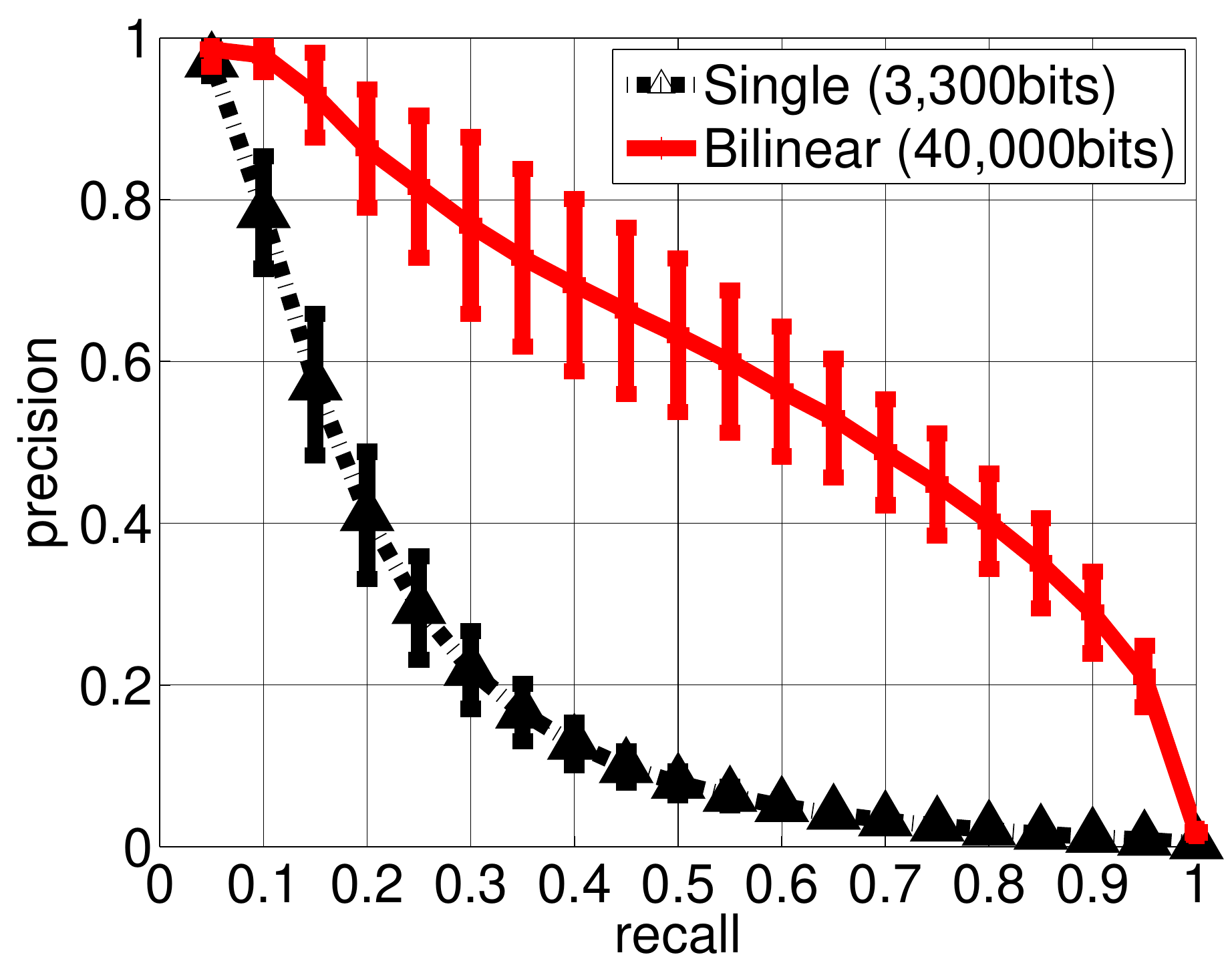}}
\subfigure[62,500 bits]{\includegraphics[scale=0.225]{./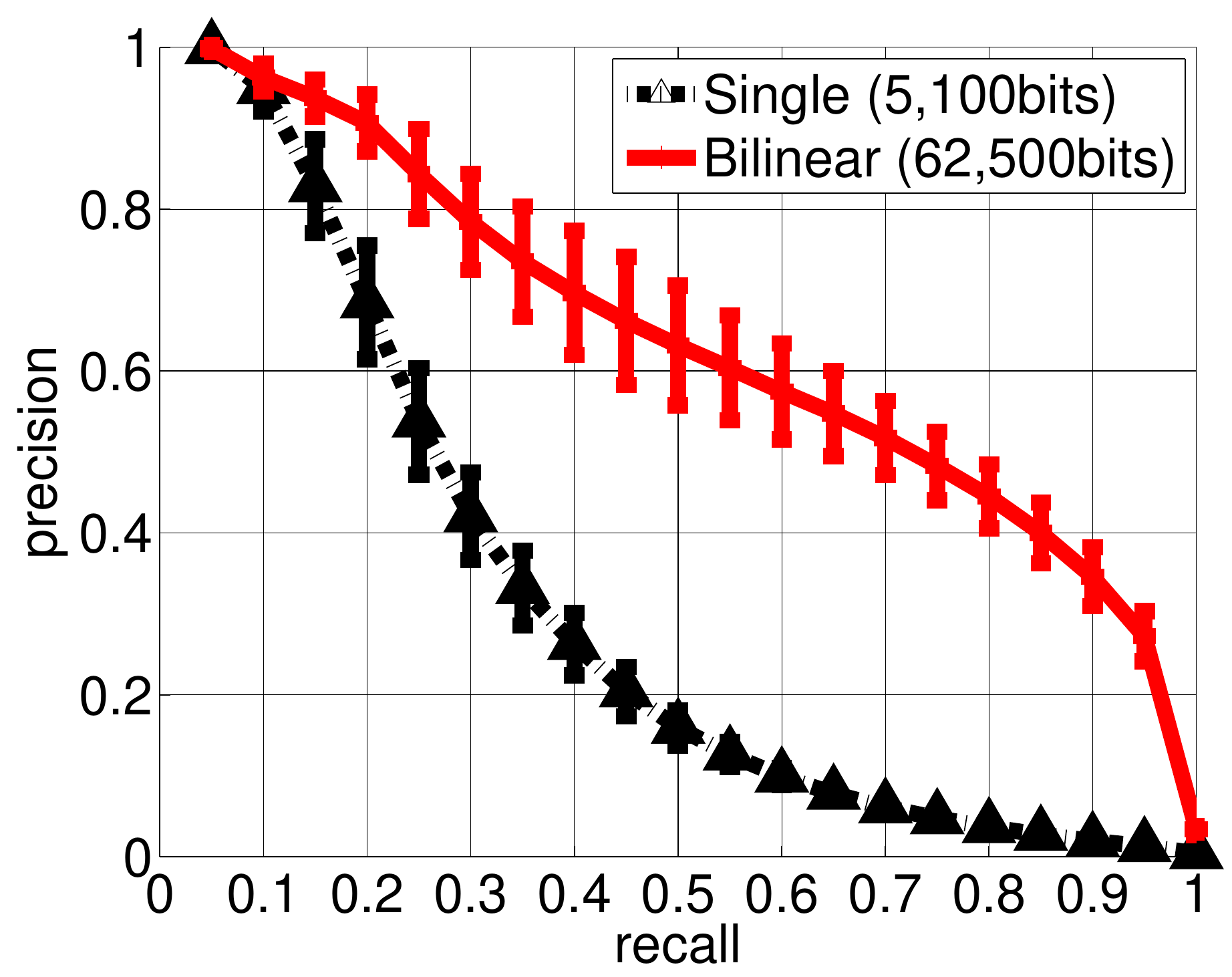}}
\end{center}
\caption{Precision-recall curves for LSH-SIK with a single projection (referred to as Single)
and BLSH-SIK (referred to as Bilinear) on Flickr45K when
the same computational time for generating a binary code is required to LSH-SIK with a single projection and BLSH-SIK. The first (second) row shows the results when $m$ of BLSH-SIK is one (five).}
\label{fig:time_limit}
\end{figure*}

Fig. \ref{fig:mnist_comp} and \ref{fig:flickr_comp} represent
precision-recall curves for LSH-SIK with a single projection
and BLSH-SIK on MNIST and Flickr45K with respect to the different number of bits.
In case of BLSH-SIK, the precision-recall curves are plotted for the different $m$,
which is introduced to reduce the correlation in Algorithm 1.
From the both figures, we observe that the larger $m$ helps to reduce the correlation of the bits
induced by BLSH-SIK. Even though BLSH-SIK cannot generate the same performance of LSH-SIK with a single projection, we argue that the performance is comparable.
Moreover, the computational time and memory consumption for
generating binary codes are significantly reduced as explained in the next paragraph.

Fig. \ref{fig:time_memory_comp} represents the comparison between
LSH-SIK with a single large projection and BLSH-SIK
in terms of the computational time \footnote{To measure the computational time, a single thread is used with a Intel i7 3.60GHz machine
(64GB main memory). Fig. \ref{fig:time_memory_comp} (a) does not include the computational time of LSH-SIK with a single projection for 62,500bits due to the high memory consumption.} and memory consumption on the Flickr45K dataset.
In case of BLSH-SIK, the time cost and memory consumption are reported with respect
to the different $m$, which evidently shows that
the computational time and memory consumption of BLSH-SIK are much smaller than
LSH-SIK with a single projection. From Fig. \ref{fig:mnist_comp}, \ref{fig:flickr_comp}
and \ref{fig:time_memory_comp}, we can conclude that $m=5$ is a good choice for BLSH-SIK,
because $m=5$ performs well compared to $m=10$ but it is much faster than $m=10$.

Fig. \ref{fig:time_limit} represents the
precision-recall curves for LSH-SIK with a single projection and BLSH-SIK on Flickr45K
with the same computational time limitation for generating a binary code.
Therefore, fewer bits are used for LSH-SIK with a single projection compared to BLSH-SIK.
For both $m=1$ and $m=5$, BLSH-SIK is superior to LSH-SIK with a single projection with the
same computational time.

\section{Conclusions}
\label{sec:conclusions}

In this paper we have presented a bilinear extension of LSH-SIK \cite{RaginskyM2009nips}, referred to as BLSH-SIK,
where we proved that the expected Hamming distance between the binary codes of two vectors is related to the value
of Gaussian kernel when column vectors of projection matrices are independently drawn from spherical Gaussian distribution.
Our theoretical analysis have confirmed that: (1) randomized bilinear projection yields similarity-preserving binary codes;
(2) the performance of BLSH-SIK is comparable to LSH-SIK, showing that the correlation between two bits of binary codes
computed by BLSH-SIK is small.
Numerical experiments on MNIST and Flickr45K datasets confirmed the validity of our method.

\medskip
\noindent
{\bf Acknowledgements}:
This work was supported by National Research Foundation (NRF) of Korea (NRF-2013R1A2A2A01067464)
and  the IT R\&D Program of MSIP/IITP (B0101-15-0307, Machine Learning Center).

{\small
\bibliographystyle{ieee}
\bibliography{sjc}
}

\newpage
\onecolumn
\appendix
\section{Bounds on the Expected Hamming Distance}

\begin{lemma}{\cite{RaginskyM2009nips}}
\label{lemma:t}
For any $u, v \in [-1, 1]$, $P_t\{\mbox{sgn}(u+t) \neq \mbox{sgn}(v+t)\} = |u-v|/2$.
\end{lemma}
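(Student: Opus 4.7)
The plan is a direct geometric computation on the real line. Without loss of generality, assume $u \leq v$, so that $-v \leq -u$; the claim is symmetric in $u$ and $v$, so the other case follows by swapping. The key observation is that $\mbox{sgn}(u+t)$, viewed as a function of $t$, is piecewise constant with a single transition at $t = -u$, and similarly $\mbox{sgn}(v+t)$ transitions at $t = -v$. First I would partition $\mathbb{R}$ at these two breakpoints into the three pieces $(-\infty, -v)$, $[-v, -u)$, and $[-u, \infty)$, and identify on which piece the two signs disagree.

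Next I would inspect each piece. On the leftmost piece both $u+t$ and $v+t$ are strictly negative; on the rightmost piece both are nonnegative. Hence the two signs agree on the outer pieces and disagree precisely on the middle interval $[-v, -u)$, whose Lebesgue length equals $v - u = |u - v|$. Because $u, v \in [-1, 1]$, the entire middle interval lies inside the support $[-1, 1]$ of $t$, so no truncation against the endpoints of the support is needed when integrating the uniform density $1/2$. Multiplying interval length by density then yields $|u-v|/2$, which is the claim.

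There is no real obstacle here; the whole argument reduces to a one-dimensional length calculation. The only delicate points are (i) the $\mbox{sgn}$ convention fixed earlier in the paper (returning $1$ on nonnegative inputs), which determines whether the disagreement interval is half-open or closed but has no effect on its Lebesgue measure, and (ii) verifying that $[-v, -u] \subseteq [-1, 1]$, which is immediate from $u, v \in [-1, 1]$.
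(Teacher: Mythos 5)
Your proof is correct; the paper states this lemma without proof (it is imported verbatim from Raginsky and Lazebnik), and your argument --- locating the single disagreement interval $[-v,-u)$, checking it lies inside the support $[-1,1]$, and multiplying its length $|u-v|$ by the uniform density $1/2$ --- is exactly the standard computation used in the cited source. Your attention to the paper's $\mbox{sgn}$ convention (returning $1$ on nonnegative inputs) and to why no truncation at the support endpoints is needed is appropriate and complete.
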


\begin{lemma}{(Lemma 2 in the paper)}
\label{lemma:expected_hamming_dist}
\bee
    \E_{\bw, \bv, b, t}\big[\Ind_{h(\bX)\neq h(\bY)}\big] = 
    \frac{8}{\pi^2} \sum_{m=1}^{\infty}\frac{1 - \kappa_{bi}(m\bX-m\bY)}{4m^2-1},
\eee
\normalsize
where $h(\bX) \triangleq \frac{1}{2}(1 + \mbox{sgn}(\mbox{cos}(\bw^\top\bX\bv+b)+t))$,
$\bw, \bv \sim \calN(0,\bI)$, $b \sim \mbox{Unif}[0,2\pi]$, and $t \sim \mbox{Unif}[-1,1]$.
\end{lemma}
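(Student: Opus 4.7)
The plan is to mirror the Raginsky--Lazebnik argument for the single-projection case, substituting bilinear random Fourier features wherever the linear ones appear. The whole calculation factors cleanly into three nested conditional expectations (first over $t$, then over $b$, then over $\bw,\bv$), and each step is a standard manipulation except for a single non-obvious Fourier identity.

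First I would condition on $\bw,\bv,b$ and integrate out $t$. Writing $u=\cos(\bw^\top\bX\bv+b)$ and $v=\cos(\bw^\top\bY\bv+b)$, both lie in $[-1,1]$, so Lemma \ref{lemma:t} gives
\[
\E_t\bigl[\Ind_{h(\bX)\neq h(\bY)}\bigr]\;=\;\tfrac{1}{2}\bigl|\cos(\bw^\top\bX\bv+b)-\cos(\bw^\top\bY\bv+b)\bigr|.
\]
Next I would apply the product-to-sum identity $\cos A-\cos C=-2\sin\!\bigl(\tfrac{A+C}{2}\bigr)\sin\!\bigl(\tfrac{A-C}{2}\bigr)$, which separates the $b$-dependent factor $|\sin(\tfrac{\bw^\top(\bX+\bY)\bv}{2}+b)|$ from the $b$-independent factor $|\sin(\tfrac{\bw^\top(\bX-\bY)\bv}{2})|$. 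Integrating the $b$-dependent factor over $b\sim\mathrm{Unif}[0,2\pi]$ yields the constant $2/\pi$, so after marginalizing out $b$ we obtain
\[
\E_{b,t}\bigl[\Ind_{h(\bX)\neq h(\bY)}\bigr]\;=\;\tfrac{2}{\pi}\bigl|\sin\!\bigl(\tfrac{\bw^\top(\bX-\bY)\bv}{2}\bigr)\bigr|.
\]

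The third step is the only slightly delicate one: expand $|\sin(x)|$ in its Fourier series $\frac{2}{\pi}-\frac{4}{\pi}\sum_{m\ge 1}\frac{\cos(2mx)}{4m^2-1}$ with $x=\bw^\top(\bX-\bY)\bv/2$, so that $2mx=\bw^\top(m\bX-m\bY)\bv$. Taking expectation over $\bw,\bv$ term by term, I would recognize each summand as
\[
\E_{\bw,\bv}\!\bigl[\cos\!\bigl(\bw^\top(m\bX-m\bY)\bv\bigr)\bigr]\;=\;\kappa_b(m\bX-m\bY),
\]
which follows from the derivation of Lemma \ref{lem:rffbl} (the Gaussian integral over $\bw$ conditioned on $\bv$ already produced exactly this quantity, and no $b$-variable enters). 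Substituting back and using the classical identity $\sum_{m\ge1}\frac{1}{4m^2-1}=\tfrac{1}{2}$ collapses the constant term of the Fourier series with the leading $4/\pi^2$, yielding the stated expression.

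The step I expect to be the main obstacle is justifying the term-by-term exchange of the infinite sum with the expectation over $\bw,\bv$; the Fourier series for $|\sin(\cdot)|$ is only conditionally convergent pointwise, so I would invoke bounded convergence (each partial sum is uniformly bounded by a fixed constant in absolute value, and the limit is bounded by $1$) to swap sum and integral. Everything else is a routine chain of trigonometric identities and the Gaussian-integral computation already carried out in Lemma \ref{lem:rffbl}, which is why the paper defers this proof to the supplementary material.
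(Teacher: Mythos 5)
Your proposal is correct and follows essentially the same route as the paper's supplementary proof: integrate out $t$ via Lemma \ref{lemma:t}, separate the $b$-dependence with the product-to-sum identity to obtain $\frac{2}{\pi}\E_{\bw,\bv}\big|\sin\big(\tfrac{\bw^\top(\bX-\bY)\bv}{2}\big)\big|$, expand $|\sin(\cdot)|$ in its Fourier series, and identify each term with $\kappa_{b}(m\bX-m\bY)$ via the Gaussian integral from Lemma \ref{lem:rffbl}. Your explicit justification of the sum--expectation interchange is a point the paper glosses over, but the argument is otherwise identical.
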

Proof. Using Lemma \ref{lemma:t}, we can show that $\E_{\bw, \bv, b, t}\big[\Ind_{h(\bX)\neq h(\bY)}\big] = \frac{1}{2}\E_{\bw,\bv,b} |\mbox{cos}(\bw^\top\bX\bv + b) - \mbox{cos}(\bw^\top\bY\bv + b)|$. By using a trigonometry identity,
\bee
    & & \frac{1}{2}\E_{b,\bw, \bv} |\mbox{cos}(\bw^\top\bX\bv + b) -
    \mbox{cos}(\bw^\top\bY\bv+b) |
     =  \frac{2}{\pi}\E_{\bw,\bv}
    \big| \mbox{sin}\big(\frac{\bw^\top(\bX-\bY)\bv}{2} \big) \big|.
\eee
By \cite{RaginskyM2009nips}, we use Fourier series of $g(\tau) = |\mbox{sin}(\tau)|$:
\bee
    g(\tau) = \frac{4}{\pi} \sum_{m=1}^{\infty} \frac{1-\mbox{cos}(2m\tau)}{4m^2-1}.
\eee
This formula leads to the following equation:
\bee
    \E_{\bw, \bv, b, t}\big[\Ind_{h(\bX)\neq h(\bY)}\big]
     = \frac{8}{\pi^2}  \sum_{m=1}^{\infty} \frac{1-\E_{\bw, \bv}\big [\mbox{cos}(m\bw^\top(\bX-\bY)\bv)\big ]}{4m^2-1}
\eee
According to the proof of Lemma 1 described in the paper,
we know that $\E_{\bw,\bv}\mbox{cos}(m\bw^\top(\bX-\bY)\bv) = \kappa_{bi}(m\bX-m\bY)$,
which completes the proof.
\qquad  Q.E.D.
\newline

\section{Bounds on the Covariance by Bilinear Projections}

\begin{lemma}
Given a datum as $\bX \in \mathcal{R}^{d_1 \times d_2}$ and bilinear projections, $\bw, \bv_1, \bv_2$,
are drawn from the $\mathcal{N}(\mathbf{0}, \bI)$,
$\E_{\bw, \bv_1, \bv_2}\big [\mbox{cos}(m\bw^\top \bX \bv_1)\mbox{cos}(n\bw^\top \bX \bv_2) \big]
= \kappa_{bi}(\sqrt{(m^2+n^2)}\bX)$.
\end{lemma}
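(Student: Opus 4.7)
The plan is to reduce the two correlated cosines to a single ``bilinear RFF'' expectation of the type already analyzed in Lemma~\ref{lem:rffbl}, via the product-to-sum identity
\[
\cos(A)\cos(B) \;=\; \tfrac{1}{2}\bigl[\cos(A-B) + \cos(A+B)\bigr].
\]
Setting $A = m\bw^\top \bX \bv_1$ and $B = n\bw^\top \bX \bv_2$, the integrand becomes
\[
\tfrac{1}{2}\Bigl[\cos\bigl(\bw^\top \bX(m\bv_1 - n\bv_2)\bigr) + \cos\bigl(\bw^\top \bX(m\bv_1 + n\bv_2)\bigr)\Bigr],
\]
so by linearity it suffices to evaluate the two expectations on the right separately.

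Next, I would exploit the stability of the Gaussian under linear combinations. Since $\bv_1, \bv_2 \sim \mathcal{N}(\bzero, \bI)$ are independent, both $m\bv_1 - n\bv_2$ and $m\bv_1 + n\bv_2$ are distributed as $\mathcal{N}(\bzero,\, (m^2+n^2)\bI)$, which is the law of $\sqrt{m^2+n^2}\,\bv'$ with $\bv' \sim \mathcal{N}(\bzero, \bI)$. Crucially, $\bw$ is independent of $(\bv_1,\bv_2)$, so conditioning on $\bw$ and applying the distributional identity gives
\[
\E_{\bw,\bv_1,\bv_2}\!\bigl[\cos\bigl(\bw^\top \bX(m\bv_1 \pm n\bv_2)\bigr)\bigr]
= \E_{\bw,\bv'}\!\bigl[\cos\bigl(\bw^\top (\sqrt{m^2+n^2}\,\bX)\, \bv'\bigr)\bigr].
\]

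Finally, I would invoke the computation carried out in the proof of Lemma~\ref{lem:rffbl}: after integrating out $b$, the bilinear RFF inner product reduces to $\E_{\bw,\bv}[\cos(\bw^\top \bZ \bv)] = \kappa_{bi}(\bZ)$ for any matrix $\bZ$ (taking $\bY = \bzero$ in the displayed calculation, or equivalently reading off the definition of $\kappa_{bi}$). Applied with $\bZ = \sqrt{m^2+n^2}\,\bX$, both the $+$ and $-$ terms evaluate to $\kappa_{bi}(\sqrt{m^2+n^2}\,\bX)$, and averaging the two identical contributions yields the claim.

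I do not anticipate a serious obstacle here: the only delicate point is the mild but necessary observation that $\bw$ is independent of $(\bv_1,\bv_2)$, so that the distributional rewriting $m\bv_1 \pm n\bv_2 \stackrel{d}{=} \sqrt{m^2+n^2}\,\bv'$ can be carried inside the joint expectation without disturbing the $\bw$-integral. Everything else is bookkeeping from Lemma~\ref{lem:rffbl}.
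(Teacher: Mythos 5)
Your proposal is correct, and it takes a genuinely different route from the paper. The paper conditions on $\bw$ and uses the independence of $\bv_1$ and $\bv_2$ to factor the inner expectation into a product of two Gaussian characteristic functions, $\E[\cos(m\bw^\top\bX\bv_1)\mid\bw]\,\E[\cos(n\bw^\top\bX\bv_2)\mid\bw]=\kappa_g(m\bX^\top\bw)\,\kappa_g(n\bX^\top\bw)=\exp\{-\tfrac{1}{2}\bw^\top(m^2+n^2)\bX\bX^\top\bw\}$, and then carries out the remaining Gaussian integral over $\bw$ explicitly to obtain $|\bI+(m^2+n^2)\bX\bX^\top|^{-1/2}$. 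You instead apply the product-to-sum identity and then invoke Gaussian stability, $m\bv_1\pm n\bv_2\stackrel{d}{=}\sqrt{m^2+n^2}\,\bv'$, which collapses each of the two resulting terms to the single-pair expectation $\E_{\bw,\bv'}[\cos(\bw^\top(\sqrt{m^2+n^2}\,\bX)\bv')]$ already computed in Lemma~\ref{lem:rffbl}; averaging two identical contributions finishes the argument. Your version is more modular --- it reuses the earlier lemma as a black box and avoids redoing the $\bw$-integral --- and you correctly flag the one delicate point, namely that independence of $\bw$ from $(\bv_1,\bv_2)$ is needed so that the distributional substitution can be made jointly with $\bw$ rather than only marginally. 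The paper's factorization argument, by contrast, does not rely on stability of the $\bv$-distribution under linear combinations and so would survive if $\bv$ were drawn from a non-Gaussian law, and it makes the final determinant form appear directly. Both proofs are sound and yield the same identity.
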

Proof.
\bee
    & & \E_{\bw, \bv_1, \bv_2}\big [\mbox{cos}(m\bw^\top \bX \bv_1)\mbox{cos}(n\bw^\top \bX \bv_2) \big] \\
    & = & \int\big[ \int \mbox{cos}(m\bw^\top\bX\bv_1)p(\bv_1)d\bv_1\big]
            \big[ \int \mbox{cos}(m\bw^\top\bX\bv_2)p(\bv_2)d\bv_2\big] p(\bw)d\bw \\
    & = & \int \kappa_g(m\bw^\top\bX)\kappa_g(n\bw^\top\bX) p(\bw)d\bw \quad \mbox{(by Lemma 1 in the paper)} \\
    & = & \int \frac{1}{\sqrt{2\pi}} \mbox{exp}(-\frac{1}{2}(\bw^\top [\bI + m^2\bX\bX^\top + n^2\bX\bX^\top]\bw))d\bw \\
    & = & |\bI + (m^2+n^2)\bX\bX^\top |^{-\frac{1}{2}} \triangleq \kappa_{bi}(\sqrt{(m^2+n^2)}\bX).
\eee

\begin{theorem}{(Theorem 3 in the paper)}
\label{theorem:cov}
Given the hash functions $h_1(\cdot)$ and $h_2(\cdot)$, the upper bound on the covariance between
the two bits is derived as
\bee
    \mbox{cov}(\cdot) \leq (\frac{64}{\pi^4}) \Big [ \big ( \sum_{m=1}^{\infty} \frac{\kappa_{g}(\mbox{vec}(\bX-\bY))^{0.79m^2}}{4m^2-1} \big )^{2}
      - \big (\sum_{m=1}^{\infty} \frac{\kappa_{g}(\mbox{vec}(\bX-\bY))^{m^2}}{4m^2-1}
     \big)^2 \Big ],
\eee
\normalsize
where $\kappa_g(\cdot)$ is the Gaussian kernel and
$\mbox{cov}(\cdot)$ is the covariance between two bits defined as
\bee
    \mbox{cov}(\cdot) & = & \E_{\bw, \bv_1, \bv_2, b_1, b_2, t_1, t_2} \big[\Ind_{h_{1}(\bX)\neq h_{1}(\bY)}    \Ind_{h_{2}(\bX)\neq h_{2}(\bY)} \big] \\
     & - & \E_{\bw, \bv_1, b_1, t_1}\big[\Ind_{h_{1}(\bX)\neq h_{1}(\bY)}\big]
       \E_{\bw, \bv_2, b_2, t_2}\big[\Ind_{h_{2}(\bX)\neq h_{2}(\bY)}\big], \\
    h_{1}(\bX) & = & \mbox{sgn} \Big(\cos(\bw^\top\bX\bv_{1}+b_1)+t_1 \Big),\\
    h_{2}(\bX) & = & \mbox{sgn} \Big(\cos(\bw^\top\bX\bv_{2}+b_2)+t_2 \Big).
\eee
\normalsize
\end{theorem}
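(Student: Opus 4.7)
The plan is to reduce both the joint indicator expectation and the marginal ones to expectations of cosines with bilinear arguments, so that the auxiliary lemma in the supplement (giving $\E_{\bw,\bv_1,\bv_2}[\cos(m\bw^\top\bZ\bv_1)\cos(n\bw^\top\bZ\bv_2)] = \kappa_{b}(\sqrt{m^{2}+n^{2}}\bZ)$) can be applied. Setting $\bZ = \bX-\bY$, I would first apply Lemma~\ref{lemma:t} (integrating out $t_1$ and $t_2$, which are independent of everything else) to write
\[
\E\bigl[\Ind_{h_1(\bX)\neq h_1(\bY)}\Ind_{h_2(\bX)\neq h_2(\bY)}\bigr] \;=\;
\tfrac{1}{4}\,\E_{\bw,\bv_1,\bv_2,b_1,b_2}\bigl[\,|C_1|\,|C_2|\,\bigr],
\]
where $C_i = \cos(\bw^\top\bX\bv_i+b_i)-\cos(\bw^\top\bY\bv_i+b_i)$. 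Next, the identity $\cos A-\cos B=-2\sin(\tfrac{A+B}{2})\sin(\tfrac{A-B}{2})$ lets me factor $|C_i|=2|\sin(\tfrac{\bw^\top(\bX+\bY)\bv_i}{2}+b_i)|\cdot|\sin(\tfrac{\bw^\top\bZ\bv_i}{2})|$; since $b_1,b_2$ are independent uniforms, integrating each out contributes a factor $\tfrac{2}{\pi}$, collapsing the joint expectation to $\tfrac{4}{\pi^{2}}\E_{\bw,\bv_1,\bv_2}[\,|\sin(\tfrac{\bw^\top\bZ\bv_1}{2})||\sin(\tfrac{\bw^\top\bZ\bv_2}{2})|\,]$.

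Then I would substitute the Fourier series $|\sin(\tau)|=\tfrac{4}{\pi}\sum_{m\geq 1}\tfrac{1-\cos(2m\tau)}{4m^{2}-1}$ into \emph{both} factors, producing a double sum indexed by $(m,n)$. Interchanging the (absolutely convergent) series with the expectation and applying the auxiliary lemma, the joint expectation becomes
\[
\tfrac{64}{\pi^{4}}\sum_{m,n\geq 1}\tfrac{1-\kappa_{b}(m\bZ)-\kappa_{b}(n\bZ)+\kappa_{b}(\sqrt{m^{2}+n^{2}}\,\bZ)}{(4m^{2}-1)(4n^{2}-1)}.
\]
Using Lemma~\ref{lem:ehd_blsh} on each marginal in exactly the same form, the product $\E[\Ind_{h_1\neq}]\E[\Ind_{h_2\neq}]$ is $\tfrac{64}{\pi^{4}}\sum_{m,n}\tfrac{(1-\kappa_{b}(m\bZ))(1-\kappa_{b}(n\bZ))}{(4m^{2}-1)(4n^{2}-1)}$. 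Subtracting cancels the $1$, the $\kappa_{b}(m\bZ)$, and the $\kappa_{b}(n\bZ)$ terms, leaving
\[
\mathrm{cov}(\cdot) \;=\; \tfrac{64}{\pi^{4}}\sum_{m,n\geq 1}\tfrac{\kappa_{b}(\sqrt{m^{2}+n^{2}}\,\bZ)\;-\;\kappa_{b}(m\bZ)\kappa_{b}(n\bZ)}{(4m^{2}-1)(4n^{2}-1)}.
\]

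To finish, I would invoke Lemma~\ref{lem:rffbl} to bound the numerator: upper-bound the first term by $\kappa_{g}(\mathrm{vec}\,\bZ)^{0.79(m^{2}+n^{2})}$ and lower-bound the second by $\kappa_{g}(\mathrm{vec}\,\bZ)^{m^{2}+n^{2}}$, using the scaling identity $\kappa_{g}(\mathrm{vec}(\alpha\bZ))=\kappa_{g}(\mathrm{vec}\,\bZ)^{\alpha^{2}}$. Because both bounds split as a product over $m$ and $n$, the double sum factorizes as the difference of squares stated in the theorem. The main obstacle I expect is the careful justification of the cosine-product step: the auxiliary lemma is stated for cosines of $\bw^\top\bZ\bv_1$ and $\bw^\top\bZ\bv_2$ with no phase, so after expanding via Fourier series I must verify that the cross-terms (e.g.\ $\cos(m\cdots)$ alone, or mixed products with $b_i$ already integrated out) reduce cleanly to $\kappa_{b}(m\bZ)$, $\kappa_{b}(n\bZ)$, and $\kappa_{b}(\sqrt{m^{2}+n^{2}}\,\bZ)$ respectively. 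A secondary subtlety is the scaling assumption: Lemma~\ref{lem:rffbl}'s upper bound was proved under $\|\bX\|_{F}\le 0.8$, but here it must hold for $\sqrt{m^{2}+n^{2}}\,\bZ$ with arbitrary $m,n$; I would note this as a mild caveat since the upper bound is applied only to control the positive term, and the lower bound on $\kappa_b$ holds unconditionally.
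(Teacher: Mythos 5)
Your proposal follows essentially the same route as the paper's supplementary proof: Lemma~\ref{lemma:t} to reduce the joint indicator to $\tfrac{1}{4}\E[|C_1||C_2|]$, the product-to-sum identity and integration over $b_1,b_2$ to reach $\tfrac{4}{\pi^2}\E[|\sin||\sin|]$, the Fourier series of $|\sin|$, the auxiliary lemma giving $\kappa_b(\sqrt{m^2+n^2}\,\bZ)$, cancellation against the product of marginals, and finally the Lemma~\ref{lem:rffbl} bounds with the scaling identity to factorize the double sum. The caveat you raise about applying the upper bound of Lemma~\ref{lem:rffbl} to $\sqrt{m^2+n^2}\,\bZ$ for arbitrary $m,n$ is a real issue that the paper's own proof silently glosses over, so flagging it is a point in your favor rather than a gap in your argument.
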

Proof. First, we want to derive the first term in the covariance in terms of $\kappa_{bi}(\cdot)$.
\small
\bee
    & & \E_{\bw, \bv_1, \bv_2, b_1, b_2, t_1, t_2}\big[\Ind_{h_{1}(\bX)\neq h_{1}(\bY)}
    \Ind_{h_{2}(\bX)\neq h_{2}(\bY)} \big] \\
    & = & \frac{1}{4}\E_{\bw, \bv_1, \bv_2, b_1, b_2}\big[|\mbox{cos}(\bw^\top\bX\bv_1 + b_1) - \mbox{cos}(\bw^\top\bY\bv_1+b_1)|
    |\mbox{cos}(\bw^\top\bX\bv_2 + b_2) - \mbox{cos}(\bw^\top\bY\bv_2+b_2)| \big] \quad \mbox{($\because$ Lemma \ref{lemma:t})}\\
    & = & \frac{4}{\pi^2} \E_{\bw, \bv_1, \bv_2} \big[ |\mbox{sin}(\frac{\bw^\top(\bX-\bY)\bv_1}{2}) \mbox{sin}(\frac{\bw^\top(\bX-\bY)\bv_2}{2}) | \big] \\
    & = & (\frac{64}{\pi^4}) \sum_{m,n=1}^{\infty}
    \E_{\bw, \bv_1, \bv_2} \big[ \big( \frac{1-\mbox{cos}(m\bw^\top(\bX-\bY)\bv_1)}{4m^2-1} \big)     \big( \frac{1-\mbox{cos}(n\bw^\top(\bX-\bY)\bv_2)}{4n^2-1} \big) \big ]
\eee
\normalsize
Using the Lemma 2, the first term in the covariance can be represented in terms of $\kappa_{bi}(\cdot)$:
\bee
    & & \E_{\bw, \bv_1, b_1, b_2, t_1, t_2}\big[\Ind_{h_{1}(\bX)\neq h_{1}(\bY)}
    \Ind_{h_{2}(\bX)\neq h_{2}(\bY)} \big] \\
    & = & (\frac{64}{\pi^4}) \sum_{m,n=1}^{\infty} \frac{1}{4m^2-1}\frac{1}{4n^2-1}
    \big (1 - \kappa_{bi}(m\bX-m\bY) - \kappa_{bi}(n\bX-n\bY) + \kappa_{bi}(\sqrt{(m^2+n^2)}(\bX-\bY)) \big )
\eee
\normalsize
The second term in the covariance is also represented in terms of $\kappa_{bi}(\cdot)$:
\bee
    & & \E_{\bw, \bv_1, b_1, t_1}\big[\Ind_{h_{1}(\bX)\neq h_{1}(\bY)}\big]
       \E_{\bw, \bv_2, b_2, t_2}\big[\Ind_{h_{2}(\bX)\neq h_{2}(\bY)}\big] \\
    & = & (\frac{64}{\pi^4}) \sum_{m,n=1}^{\infty} \frac{1}{4m^2-1}\frac{1}{4n^2-1}
    \big (1 - \kappa_{bi}(m(\bX-\bY)) - \kappa_{bi}(n(\bX-\bY))
    + \kappa_{bi}(m(\bX-\bY))\kappa_{bi}(n(\bX-\bY)) \big )
\eee
\normalsize
Therefore, the covariance between two bits is computed as
\small
\bee
    \mbox{cov}(\cdot)
    & = & (\frac{64}{\pi^4}) \sum_{m,n=1}^{\infty} \frac{1}{4m^2-1}\frac{1}{4n^2-1}
    \big [ \kappa_{bi}(\sqrt{(m^2+n^2)}(\bX-\bY)) - \kappa_{bi}(m(\bX-\bY))\kappa_{bi}(n(\bX-\bY))
    \big ] \\
    & \leq & (\frac{64}{\pi^4}) \sum_{m,n=1}^{\infty} \frac{1}{4m^2-1}\frac{1}{4n^2-1}
    \big [ \kappa_{g}(\mbox{vec}(\sqrt{(m^2+n^2)}(\bX-\bY)))^{0.79}
    - \kappa_{g}(\mbox{vec}(m\bX-m\bY))\kappa_{g}(\mbox{vec}(n\bX-n\bY))
    \big ] \\
    & = & (\frac{64}{\pi^4}) \sum_{m,n=1}^{\infty} \frac{1}{4m^2-1}\frac{1}{4n^2-1}
    \big [ \kappa_{g}(\mbox{vec}(\bX-\bY))^{0.79(m^2+n^2)}
    - \kappa_{g}(\mbox{vec}(\bX-\bY))^{m^2}\kappa_{g}(\mbox{vec}(\bX-\bY))^{n^2}
    \big ] \\
    & = & (\frac{64}{\pi^4}) \big [ \big ( \sum_{m=1}^{\infty} \frac{\kappa_{g}(\mbox{vec}(\bX-\bY))^{0.79m^2}}{4m^2-1} \big )^{2}
     - \big (\sum_{m=1}^{\infty} \frac{\kappa_{g}(\mbox{vec}(\bX-\bY))^{m^2}}{4m^2-1}
     \big)^2 \big ],
\eee
\normalsize
where the second inequality is given by Lemma 1 in the paper ($\kappa_g(\mbox{vec}(\bX-\bY)) \leq \kappa_{bi}(\bX-\bY) \leq \kappa_g(\mbox{vec}(\bX-\bY))^{0.79}$) and
the third equality is given by $\kappa_g(\mbox{vec}(m\bX-m\bY)) = \kappa_g(\mbox{vec}(\bX-\bY))^{m^2}$.
The lower bound can be derived in a similar way.

\begin{corollary}
Given the hash functions $h_1(\cdot)$ and $h_2(\cdot)$, the lower bound on the covariance between
the two bits is derived as
\bee
    \mbox{cov}(\cdot) \geq (\frac{64}{\pi^4}) \Big [ \big ( \sum_{m=1}^{\infty} \frac{\kappa_{g}(\mbox{vec}(\bX-\bY))^{m^2}}{4m^2-1} \big )^{2}
      - \big (\sum_{m=1}^{\infty} \frac{\kappa_{g}(\mbox{vec}(\bX-\bY))^{0.79m^2}}{4m^2-1}
     \big)^2 \Big ],
\eee
\normalsize
where $\kappa_g(\cdot)$ is the Gaussian kernel and
$\mbox{cov}(\cdot)$ is the covariance between two bits defined as
\bee
    \mbox{cov}(\cdot) & = & \E_{\bw, \bv_1, \bv_2, b_1, b_2, t_1, t_2} \big[\Ind_{h_{1}(\bX)\neq h_{1}(\bY)}    \Ind_{h_{2}(\bX)\neq h_{2}(\bY)} \big] \\
     & - & \E_{\bw, \bv_1, b_1, t_1}\big[\Ind_{h_{1}(\bX)\neq h_{1}(\bY)}\big]
       \E_{\bw, \bv_2, b_2, t_2}\big[\Ind_{h_{2}(\bX)\neq h_{2}(\bY)}\big], \\
    h_{1}(\bX) & = & \mbox{sgn} \Big(\cos(\bw^\top\bX\bv_{1}+b_1)+t_1 \Big),\\
    h_{2}(\bX) & = & \mbox{sgn} \Big(\cos(\bw^\top\bX\bv_{2}+b_2)+t_2 \Big).
\eee
\normalsize
\end{corollary}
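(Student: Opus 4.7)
The plan is to expand both the joint probability $\E[\Ind_{h_1(\bX)\neq h_1(\bY)}\Ind_{h_2(\bX)\neq h_2(\bY)}]$ and the product of marginals in terms of the bilinear kernel $\kappa_b$, and then convert each $\kappa_b$ factor into a $\kappa_g$ bound using Lemma \ref{lem:rffbl}. First I would apply the $t$-lemma used in the proof of Lemma \ref{lem:ehd_blsh} (namely $P_t\{\mbox{sgn}(u+t)\neq\mbox{sgn}(v+t)\}=|u-v|/2$) separately for $t_1$ and $t_2$. Since $t_1,t_2$ are independent, this reduces the joint expectation to $\frac{1}{4}\E_{\bw,\bv_1,\bv_2,b_1,b_2}\big[|\cos(\bw^\top\bX\bv_1+b_1)-\cos(\bw^\top\bY\bv_1+b_1)|\cdot|\cos(\bw^\top\bX\bv_2+b_2)-\cos(\bw^\top\bY\bv_2+b_2)|\big]$. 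The sum-to-product identity followed by averaging over $b_1,b_2$ (each of which contributes an independent factor of $\frac{4}{\pi}$ via the standard calculation $\E_b|\cos(A+b)-\cos(B+b)|=\frac{4}{\pi}|\sin(\frac{A-B}{2})|$) turns this into $\frac{4}{\pi^2}\E_{\bw,\bv_1,\bv_2}\big[|\sin(\tfrac{1}{2}\bw^\top(\bX-\bY)\bv_1)\sin(\tfrac{1}{2}\bw^\top(\bX-\bY)\bv_2)|\big]$.

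Next I would substitute the Fourier series $|\sin\tau|=\frac{4}{\pi}\sum_{m\ge 1}\frac{1-\cos(2m\tau)}{4m^2-1}$ into each of the two $|\sin|$ factors, producing a double sum over $m,n$. Expanding the product of $(1-\cos(\cdot))(1-\cos(\cdot))$ and interchanging sum and expectation (justified by absolute convergence) leaves four terms, one of which is the constant $1$ and two of which factor into single-$m$ or single-$n$ expectations. The crucial term is $\E_{\bw,\bv_1,\bv_2}[\cos(m\bw^\top(\bX-\bY)\bv_1)\cos(n\bw^\top(\bX-\bY)\bv_2)]$; because $\bv_1$ and $\bv_2$ are conditionally independent given $\bw$, this equals $\E_\bw[\kappa_g(m\bw^\top(\bX-\bY))\kappa_g(n\bw^\top(\bX-\bY))]$ which, by the same Gaussian integral trick as in Lemma \ref{lem:rffbl}, evaluates to $|\bI+(m^2+n^2)(\bX-\bY)(\bX-\bY)^\top|^{-1/2}=\kappa_b\big(\sqrt{m^2+n^2}(\bX-\bY)\big)$. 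An analogous but simpler expansion of the product of marginals, again via Lemma \ref{lem:ehd_blsh}, gives the same structure but with the mixed term replaced by $\kappa_b(m(\bX-\bY))\kappa_b(n(\bX-\bY))$.

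Subtracting the two expansions causes the constant and the single-variable terms to cancel, leaving
\begin{equation*}
\mbox{cov}(\cdot)=\frac{64}{\pi^4}\sum_{m,n\ge 1}\frac{\kappa_b(\sqrt{m^2+n^2}(\bX-\bY))-\kappa_b(m(\bX-\bY))\kappa_b(n(\bX-\bY))}{(4m^2-1)(4n^2-1)}.
\end{equation*}
At this point I would apply Lemma \ref{lem:rffbl} in the direction that makes the numerator as large as possible: upper-bound $\kappa_b(\sqrt{m^2+n^2}(\bX-\bY))$ by $\kappa_g(\mbox{vec}(\bX-\bY))^{0.79(m^2+n^2)}$ (using $\kappa_g(\mbox{vec}(\alpha\bX-\alpha\bY))=\kappa_g(\mbox{vec}(\bX-\bY))^{\alpha^2}$) and lower-bound each $\kappa_b(m(\bX-\bY))$ by $\kappa_g(\mbox{vec}(\bX-\bY))^{m^2}$. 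The resulting double sum separates into the difference of squares stated in the theorem.

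The main obstacle I foresee is twofold. Verifying that the directions of the Lemma \ref{lem:rffbl} inequalities are used consistently matters: one must replace the added $\kappa_b$-term by its $\kappa_g^{0.79}$ upper bound and the subtracted product by its $\kappa_g$ lower bound; reversing either would give a vacuous or incorrect estimate. A subtler point is that Lemma \ref{lem:rffbl}'s hypothesis $\|\bX\|_F\le 0.8$ and the small-top-eigenvalue assumption must continue to hold after rescaling by $\sqrt{m^2+n^2}$; in fact the upper bound in Lemma \ref{lem:rffbl} fails for large $m,n$, so one must be careful to apply it only within its regime of validity (e.g., absorbing the tail of the $m,n$ sums via the $(4m^2-1)^{-1}(4n^2-1)^{-1}$ weights, or invoking Property 2 that $\kappa_b$ is nonincreasing under integer dilations to control the tail). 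Once this is handled, the algebraic factorization into the difference of squares of single sums is routine.
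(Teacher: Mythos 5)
Your derivation of the exact covariance identity is precisely the paper's: the $t$-lemma applied independently for $t_1,t_2$, the $b$-averaging producing the factor $\frac{4}{\pi^2}\E|\sin(\cdot)\sin(\cdot)|$, the Fourier series of $|\sin\tau|$, and the key evaluation $\E_{\bw,\bv_1,\bv_2}[\cos(m\bw^\top(\bX-\bY)\bv_1)\cos(n\bw^\top(\bX-\bY)\bv_2)] = |\bI+(m^2+n^2)\Delta|^{-1/2}=\kappa_b(\sqrt{m^2+n^2}(\bX-\bY))$ via conditional independence of $\bv_1,\bv_2$ given $\bw$. Up to and including the closed form
\begin{equation*}
\mbox{cov}(\cdot)=\frac{64}{\pi^4}\sum_{m,n\ge 1}\frac{\kappa_b(\sqrt{m^2+n^2}(\bX-\bY))-\kappa_b(m(\bX-\bY))\,\kappa_b(n(\bX-\bY))}{(4m^2-1)(4n^2-1)},
\end{equation*}
everything matches the supplementary material.

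The gap is in the last step: you apply Lemma 1 ``in the direction that makes the numerator as large as possible'' and arrive at ``the difference of squares stated in the theorem.'' That proves the \emph{upper} bound (Theorem 3), not the \emph{lower} bound claimed in this Corollary. For the Corollary you must bound the numerator from \emph{below}: use $\kappa_b(\sqrt{m^2+n^2}(\bX-\bY))\ge \kappa_g(\mbox{vec}(\bX-\bY))^{m^2+n^2}$ (the unconditional, $1+\lambda\le e^{\lambda}$ side of Lemma 1) and $\kappa_b(m(\bX-\bY))\kappa_b(n(\bX-\bY))\le \kappa_g(\mbox{vec}(\bX-\bY))^{0.79(m^2+n^2)}$ (the conditional side). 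This is exactly the reversal that your own caveat paragraph warns against, so as written the proposal establishes the wrong inequality; the fix is mechanical but necessary. Your secondary concern is well taken and applies equally to the paper's own argument: the upper-bound half of Lemma 1 is invoked on the rescaled differences $m(\bX-\bY)$ (and $\sqrt{m^2+n^2}(\bX-\bY)$ in the Theorem), whose eigenvalues grow like $m^2$ and so violate the $\lambda_1\le 0.56$ regime for large $m$; neither the paper nor your proposal actually repairs this, so flagging it is appropriate but it remains an unresolved issue rather than one you have ``handled.''
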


\end{document}